\newcommand{\BfPara}[1]{{\noindent\bf#1.}\xspace}
\newsavebox{\@brx}
\newcommand{\llangle}[1][]{\savebox{\@brx}{\(\m@th{#1\langle}\)}%
  \mathopen{\copy\@brx\kern-0.5\wd\@brx\usebox{\@brx}}}
\newcommand{\rrangle}[1][]{\savebox{\@brx}{\(\m@th{#1\rangle}\)}%
  \mathclose{\copy\@brx\kern-0.5\wd\@brx\usebox{\@brx}}}
\newtheorem{assumption}{Assumption}
\newtheorem{lemma}{Lemma}
\newtheorem{theorem}{Theorem}
\newtheorem{proposition}{Proposition}
\def\eqref#1{eqn.~(\ref{#1})}
\def\1{\bm{1}}
\DeclareMathAlphabet{\mathsfit}{\encodingdefault}{\sfdefault}{m}{sl}
\SetMathAlphabet{\mathsfit}{bold}{\encodingdefault}{\sfdefault}{bx}{n}
\def\0{{\bf 0}}
\def\1{{\bf 1}}
\def\EB{{\mathbb E}}
\begin{document}

\title{\fontsize{23}{28}\selectfont Joint Superposition Coding and Training for \\Federated Learning over Multi-Width Neural Networks}

\author{
\IEEEauthorblockN{$^{\dag}$Hankyul Baek, $^{\dag}$Won Joon Yun, $^{\dag}$Yunseok Kwak, $^{\ddag}$Soyi Jung, $^{\circ}$Mingyue Ji, $^{\ast}$Mehdi Bennis, \\ $^{\diamond}$Jihong Park, and $^{\dag}$Joongheon Kim}
\IEEEauthorblockA{$^{\dag}$Department of Electrical and Computer Engineering, Korea University, Seoul, Republic of Korea
\\
$^{\ddag}$School of Software, Hallym University, Chuncheon, Republic of Korea
\\
$^{\circ}$Department of Electrical and Computer Engineering, University of Utah, Salt Lake City, UT, USA
\\
$^{\ast}$Centre for Wireless Communications, University of Oulu, Oulu, Finland
\\
$^{\diamond}$School of Information Technology, Deakin University, Geelong, Australia
}
E-mails: \texttt{\{67back,ywjoon95,rhkrdbstjr0\}@korea.ac.kr}, \texttt{sjung@hallym.ac.kr}, 
\texttt{mingyue.ji@utah.edu}, \\ 
\texttt{mehdi.bennis@oulu.fi}, \texttt{jihong.park@deakin.edu.au}, \texttt{joongheon@korea.ac.kr}
}
\maketitle

\begin{abstract}
This paper aims to integrate two synergetic technologies, federated learning (FL) and width-adjustable slimmable neural network (SNN) architectures. FL preserves data privacy by exchanging the locally trained models of mobile devices. By adopting SNNs as local models, FL can flexibly cope with the time-varying energy capacities of mobile devices. Combining FL and SNNs is however non-trivial, particularly under wireless connections with time-varying channel conditions. Furthermore, existing multi-width SNN training algorithms are sensitive to the data distributions across devices, so are ill-suited to FL. Motivated by this, we propose a communication and energy efficient SNN-based FL (named \emph{SlimFL}) that jointly utilizes \emph{superposition coding (SC)} for global model aggregation and \emph{superposition training (ST)} for updating local models. By applying SC, SlimFL exchanges the superposition of multiple width configurations that are decoded as many as possible for a given communication throughput. Leveraging ST, SlimFL aligns the forward propagation of different width configurations, while avoiding the inter-width interference during back propagation. We formally prove the convergence of SlimFL. The result reveals that SlimFL is not only communication-efficient but also can counteract non-IID data distributions and poor channel conditions, which is also corroborated by simulations.\end{abstract}

\section{Introduction}\label{sec:1}

  Federated learning (FL) is a promising solution to enable high-quality on-device learning at mobile devices such as phones, cars, and drones~\cite{Brendan17,infocom1}. Each of these devices has only a limited amount of local data, and FL can overcome the lack of local model training samples by exchanging and aggregating the local models of different devices. To reach its full potential, it is essential to scale up the range of federating devices that are often wirelessly connected while having heterogeneous levels of available energy~\cite{pieee21park}. This mandates addressing the following interrelated energy and wireless communication problems.

\begin{figure}[t!]
\centering
    \subfigure[SlimFL.]{\includegraphics[width=.46\columnwidth]{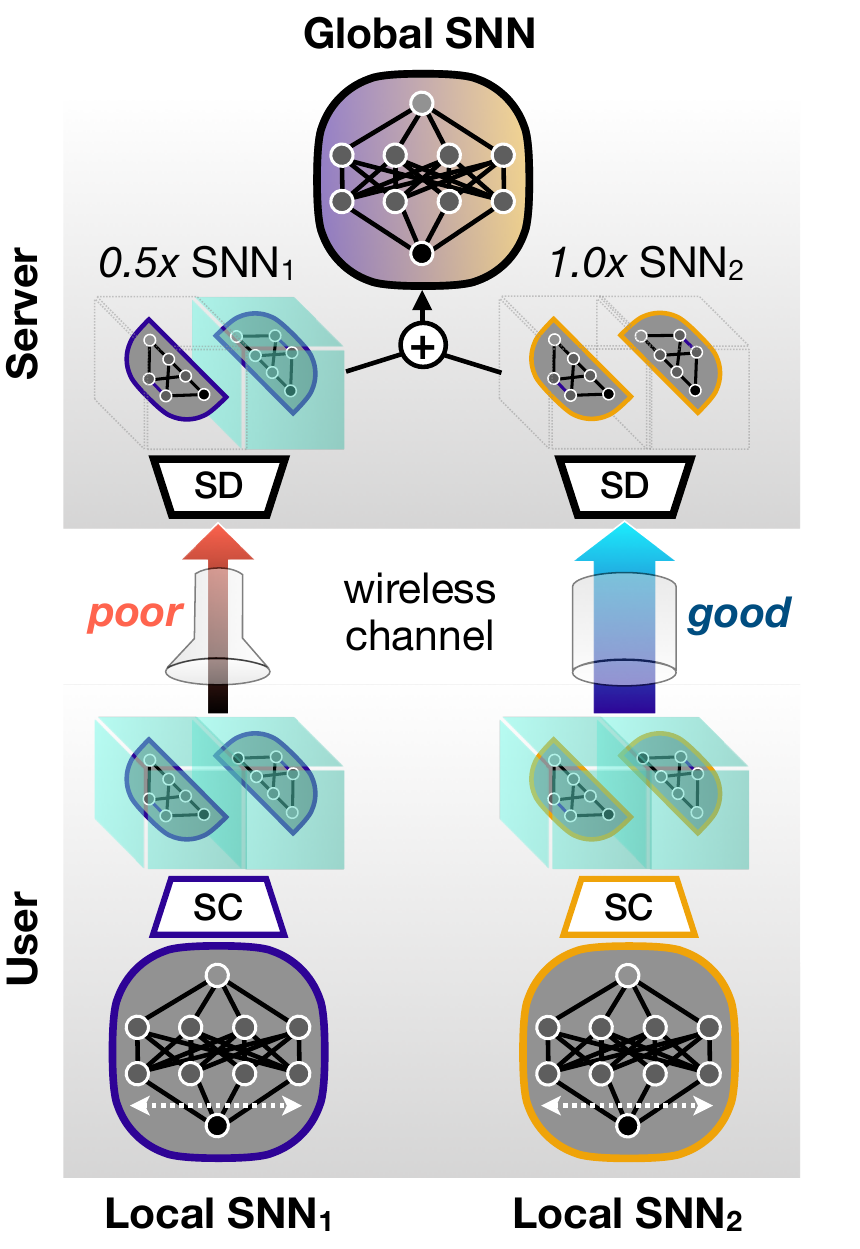}}
    \subfigure[Vanilla FL-1.5x.]{\includegraphics[width=.52\columnwidth]{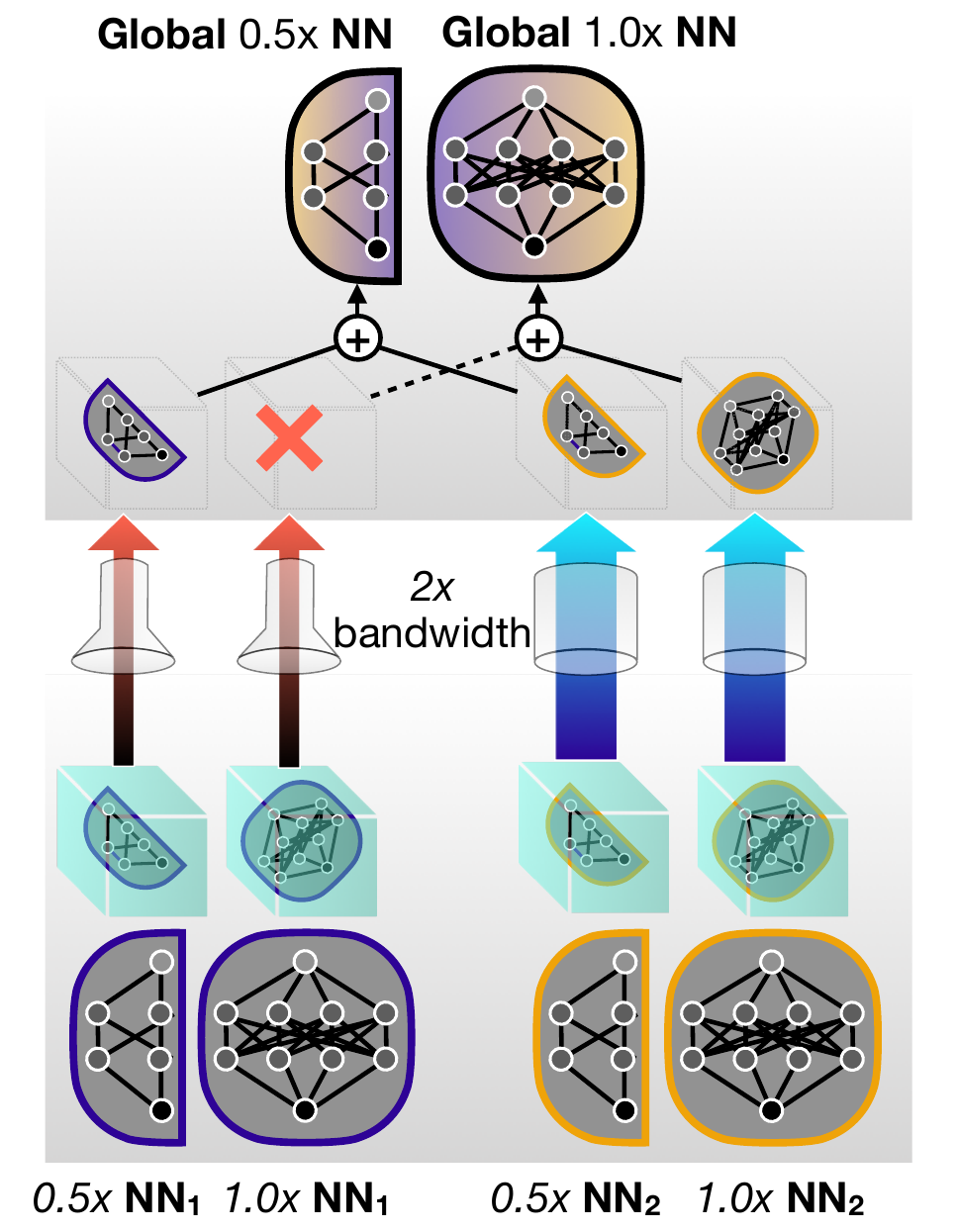}}
    \caption{A schematic illustration of (a) \emph{slimmable federated learning (SlimFL)} using the slimmable neural networks (SNNs) with the superposition coding (SC) and successive decoding (SD), compared to (b) vanilla federated learning (Vanilla FL-1.5x) consuming 2x bandwidth.}
    \label{fig:abstract}
    \vspace{-3mm}
\end{figure}

On the one hand, different devices have heterogeneous levels of available energy. Low-energy devices are likely to run small models, whereas high-energy devices prefer to operate large models. Unfortunately, FL can only aggregate the local models under the same architecture \cite{Brendan17}, so is able to train either small or large models at a time. To cope with heterogeneous energy capacity, one should therefore perform FL two times with the increased overall training time, or run FL simultaneously for two separate groups of devices with reduced training samples while compromising accuracy. Performing FL using a width-controllable \emph{slimmable neural network (SNN)} architecture enables to train these two-level models at once while federating across all devices, after which each trained local SNN model can adjust its width\typeout{ due to its desired energy consumption} \cite{ICCV2019_USlimmable}.

On the other hand, wireless communication channel conditions vary over time and across different devices. When the channel information is known before transmission, poor-channel devices can only exchange small models, while good-channel devices can participate in FL using large models. SNN allows them to collaborate together, in a way that poor-channel devices send their local SNNs after reducing the widths, and contribute only to a fraction of the entire global model construction. This however entails extra communication and energy costs for probing channel conditions \cite{4267831} that change over time and locations due to random fading and mobility.

Spurred by the aforementioned problems, we propose the first \emph{SNN-based FL algorithm} that leverages \emph{superposition coding (SC)} and \emph{successive decoding (SD)}, coined \emph{slimmable FL (SlimFL)}. By applying SNNs to FL, SlimFL can address the heterogeneous energy capacities. Besides, by exploiting SC and SD, SlimFL can proactively cope with the heterogeneous channel conditions for unknown channel state information. 

To illustrate, consider an SNN with two width levels as shown in Fig.~\ref{fig:abstract}. In the uplink from each device to the server, the device uploads its local updates after jointly encoding the left-half (LH) and the right-half (RH) of its local SNN model while allocating different transmission power levels to them, i.e., SC \cite{Cover:TIT72}. Then, the server first attempts to decode the LH. If decoding the LH is successful, the server successively tries to decode the RH, i.e., SD or also known as successive interference cancellation (SIC). Accordingly, when the device-server channel {throughput} is low, the server can decode only the LH of the uploaded model, obtaining the \emph{half-width (0.5x)} model. When the channel {throughput} is high, the server can decode both LH and RH, and combine them to yield the \emph{full-width model (1.0x)}. Consequently, the sever constructs a global model superpositioning the decoded 0.5x and 1.0x local models, which is downloaded by each device. The device replaces its local model with the downloaded global model, and iterates the aforementioned operation until convergence.

In essence, the effectiveness of SlimFL hinges on creating a synergy between multiple width configurations, i.e., 0.5x and 1.0x models, which is however non-trivial. The global model is a mixture of different width configurations, so the standard FL convergence becomes questionable. Furthermore, the local model consists of multiple width configurations, so training them may interfere with one another. Existing SNN architectures and training algorithms are intended for standalone learning, so are ill-suited for SlimFL particularly under non-independent and identically distributed (non-IID) data distributions. To address these challenges in SlimFL, in this paper we develop novel SNN architecture and and training algorithm, named \emph{superposition training (ST)}, and study the convergence and effectiveness of SlimFL. The major contributions of this paper are summarized as below.
\begin{enumerate}

\item We first propose an FL framework for SNNs, SlimFL (see Fig. 1(a) and \textbf{Algorithm~\ref{alg:SlimFL}}), which exploits SC for improving communication efficiency under time-varying wireless channels with limited bandwidth.

\item We develop a local SNN training method for SlimFL, ST (see \textbf{Algorithm~\ref{alg:sustrain}}), which avoids unnecessary inter-width interference, and thus achieves fast convergence with high accuracy regardless of data distributions.

\item We propose an energy-efficient SNN architecture, Ultra Light MobileNet (see \textbf{Table \ref{tab:ULSNN}}), achieving $>\!36$x less FLOPS than the state-of-the-art SNN architecture~\cite{ICCV2019_USlimmable}.

\item We prove the convergence of SlimFL (see \textbf{Theorem~\ref{convth2}}). The result shows the favorable conditions of SlimFL in terms of the channel quality and data distributions, and provides the optimal transmit power allocation guideline on SC (see \textbf{Proposition~1}) as well as the optimal weight guideline on ST (see \textbf{Proposition~2}).

\item We corroborate our analysis by simulation, showing that compared to vanilla FL (see Fig. 1(b)), SlimFL achieves higher accuracy and lower communication costs under poor channel conditions and non-IID data distributions.
\end{enumerate}

The notations in this paper are listed in Tab.~\ref{tab:notation-convergence}.

\section{Related Work}\label{sec:2}

\subsection{Multi-Width/Depth Neural Networks}  
To meet different on-device energy and memory requirements, it is common to prune model weights \cite{Han:16} or transfer a large trained model's knowledge into a small empty model via knowledge distillation (KD) \cite{HintonKD:14}, which however incurs additional training operations. Alternatively, one can adjust a trained model's width and/or depth in accordance with the resource requirements. Following this principle, depth-controlled neural networks \cite{IJCNN2019_DepthControllable} and adaptive neural networks \cite{AAAI2019_Anytime} can adjust their depths after training, whereas SNNs tune their widths \cite{ICCV2019_USlimmable}. In this paper, we leverage width-controllable SNNs, and develop its FL version, SlimFL. Such an extension is non-trivial, and entails several design issues, such as local SNN training algorithms, aggregating segment prioritization\typeout{ (e.g., more aggregating the same LH/RH segments vs. balancing the LH and RH aggregations), which will be discussed in Sec~\ref{sec:4}}.

\subsection{Superposition Coding \& Successive Decoding}
In a nutshell, SC encodes two different data signals into one while allocating two different power levels before transmissions \cite{Cover:TIT72}. After receptions, SD decodes the SC-encoded signal by first decoding the stronger signal, followed by subtracting it and decoding the remainder as the weaker signal \cite{TseBook:FundamaentalsWC:2005}. SC has been widely utilized in communication systems, particularly for simultaneously supporting different devices in the context of non-orthogonal multiple access (NOMA) \cite{7842433}. We apply the same principle for supporting a single device simultaneously requesting two types of data with different priorities, such that the higher priority signal should almost surely be decoded while the lower priority signal can be successively decoded only under good channel conditions. Precisely, SlimFL makes an SNN's LH a higher priority so as to receive the $0.5$x model even under poor channels. It can decode the SNN's RH only when the channel conditions are good, obtaining the $1.0$x model by combining both LH and RH. Consequently, SlimFL ensures stable convergence under poor channels. 

\subsection{FL Convergence Analysis}
FL convergence has recently been studied extensively~\cite{wang2021field,khaled2020tighter}, among which we fundamentally rely on the following convergence results. Under IID data distributions, vanilla FL, also known as FedAvg, is equivalent to the local SGD algorithm whose convergence is known~\cite{mangasarian1995parallel}. Under non-IID data distributions, the convergence of FedAvg is provided in \cite{li2019convergence} where the non-IIDness is determined by the bound of a dissimilarity between global and local average risks. Alternatively, the convergence under non-IID data distributions is proved by \cite{khaled2020tighter} where the non-IIDness is measured by the average of the local stochastic gradient variance, where the average is taken across the devices. Taking into account non-IID data distributions, our SlimFL convergence analysis relies primarily on the method in \cite{li2019convergence}, but our non-IIDness definition is similar to \cite{khaled2020tighter}. Note that the convergence of standalone SNN training has recently been studied in \cite{Stich2106}, yet without FL.

\section{Local Model Architecture and Training}\label{sec:3}
Existing SNN architectures and training algorithms are intended for standalone learning~\cite{ICCV2019_USlimmable}. This section proposes a novel SNN architecture and its local training for SlimFL.

\subsection{Ultra Light SNN Architecture}\label{sec:3-1}

\begin{table}[t!]
    \caption{Model architecture of UL--MobileNet.}
    
    \small \centering
    \resizebox{\columnwidth}{!}{\begin{tabular}{l|r}
    \toprule[1pt]
      \bf{UL--MobileNet Layers}             & \bf{Weight connection of layers 1.0x (0.5x)}  \\ \midrule
      \multicolumn{1}{l|}{\bf{Convolution layer}} &$C_{in}\times C_{out} \times K \times K$\\
      Conv2D + ReLU6 & $1 \times 32 \times 3 \times 3~(1 \times 16 \times 3 \times 3)$  \\
      Conv2D + ReLU6 & $1 \times 32 \times 3 \times 3~(1 \times 16 \times 3 \times 3)$  \\
      Conv2D + ReLU6 & $32 \times 32 \times 1 \times 1~(16 \times 16 \times 1 \times 1)$\\
      Conv2D + ReLU6 & $1 \times 32 \times 3 \times 3~(1 \times 16 \times 3 \times 3)$  \\
      Conv2D + ReLU6 & $32\times 64 \times 1 \times 1~(16 \times 32 \times 1 \times 1)$ \\
      \multicolumn{1}{l|}{\bf{Fully connected layer}}& $C_{in}\times C_{out}$ \\
      Linear         & $64 \times 10~(32 \times 10)$ \\ \bottomrule[1pt] 
    \end{tabular}}
    \begin{tabular}{p{0.98\columnwidth}}
    \footnotesize $C_{in}$, $C_{out}$, and $K$ stand for dimension of input channel, dimension of output channel, kernel size, respectively, and $\text{ReLU}6(x) = \min(\max(0, x), 6)$.
    \end{tabular}\label{tab:ULSNN}
    \end{table}
The state-of-the-art SNN architecture is the {US-MobileNet} proposed in~\cite{ICCV2019_USlimmable}. As opposed to a \emph{de facto} standard neural network architecture with a universal batch normalization (BN) layer, US-MobileNet is equipped with multiple separate BN layers to cope with all slimmable model configurations. While effective in standalone learning, in SlimFL with wireless connectivity, not all multi-width configurations are exchanged due to insufficient communication throughput, while the exchanged width configurations are aggregated across devices, diluting the effectiveness of BN. In our experiments we even observed training convergence failures due to BN. Furthermore, managing multiple BN layers not only consumes additional memory costs, but also entails high computing computing overhead. For these reasons, we remove BN layers, and consider a lighter version of US-MobileNet, named \emph{Ultra Light MobileNet (UL-MobileNet)}, with the specifics provided in Tab.~\ref{tab:tab_parameters}. Compared to US-MobileNet with more than 100M FLOPS, UL-MobileNet costs only 2.76M FLOPS. 

Hereafter, we consider that each device $k$ has an SNN following the UL-MobileNet architecture. At the $t$-th iteration, the SNN model has the weight vector $\theta_t^k$ with two width configurations: 0.5x width configuration $\theta_t^k \odot \Xi_1$ and 1.0x width configuration $\theta_t^k \odot \Xi_2$ ($= \theta_t^k$), where $\odot$ is the element-wise product and $\Xi_i$ represents a binary mask for extracting the parameters of $i$-th width configuration.

\subsection{Superposition SNN Training}\label{superposition}

Training a multi-width SNN is challenging, in that the weights of earlier trained width configurations can be distorted by the latter backpropagation (BP) for other overlapping widths. This inter-width interference not only deteriorates the inference accuracy, while hindering the training convergence. The first SNN training algorithm proposed in \cite{yu2018slimmable}, referred to as SlimTrain (see Appendix~\ref{alg:slim}), partly mitigates such inter-width interference by training different width configurations in descending order of size. 
While following the sample principle, the state-of-the-art SNN training algorithm proposed in \cite{ICCV2019_USlimmable}, referred to as universal SNN (USTrain), additionally applies the inplace knowledge distillation (IPKD) from the full-width to all the sub-widths. The IPKD encourages each sub-width (i.e., student) to yield a similar softmax output (i.e., logit) to that of the full-width (i.e., teacher) such that their overlapping BP gradients become less different from each other, thereby reducing the inter-width interference.

However, as shown by experiments in Fig. \ref{fig:trainmethod}, USTrain is unfit for SlimFL particularly under non-IID data distributions (i.e., $\alpha=0.1$, see Fig.~\ref{fig:Non-iid-distribution}), where SlimTrain even outperforms USTrain. We conjecture that the problem comes from the use of outdated teacher's knowledge in USTrain. In USTrain, the teacher's logit is set as the value before updating the teacher's model, and is compared with a student after updating the teacher's model. Non-IID data distributions exacerbate this mismatch, where the full-width teacher model is significantly updated in the first epoch after downloading the global model due to the huge gap between local and global models.

\begin{figure}[t!]
\centering
\begin{tabular}{cc}
     \multicolumn{2}{c}{\includegraphics[width=0.85\columnwidth]{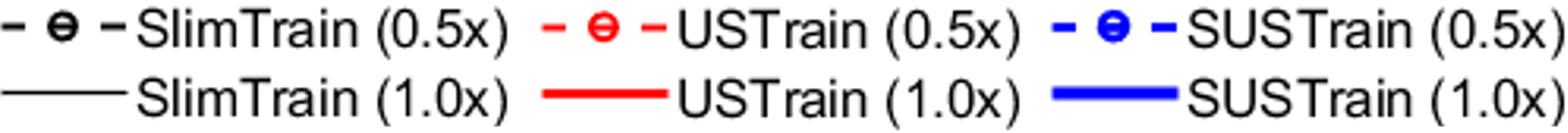}}\\
    \includegraphics[width=0.45\columnwidth]{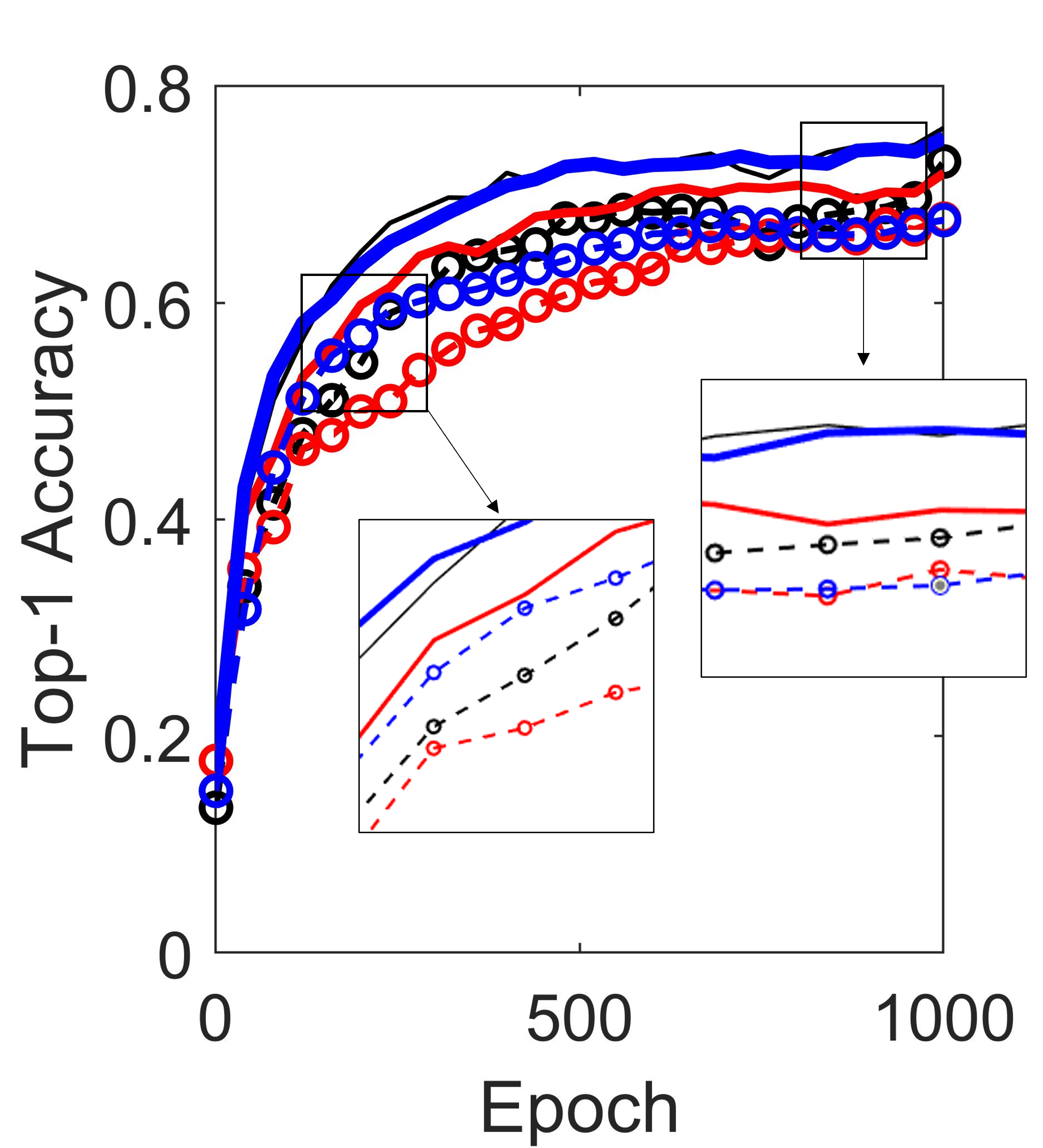} & 
     \includegraphics[width=0.45\columnwidth]{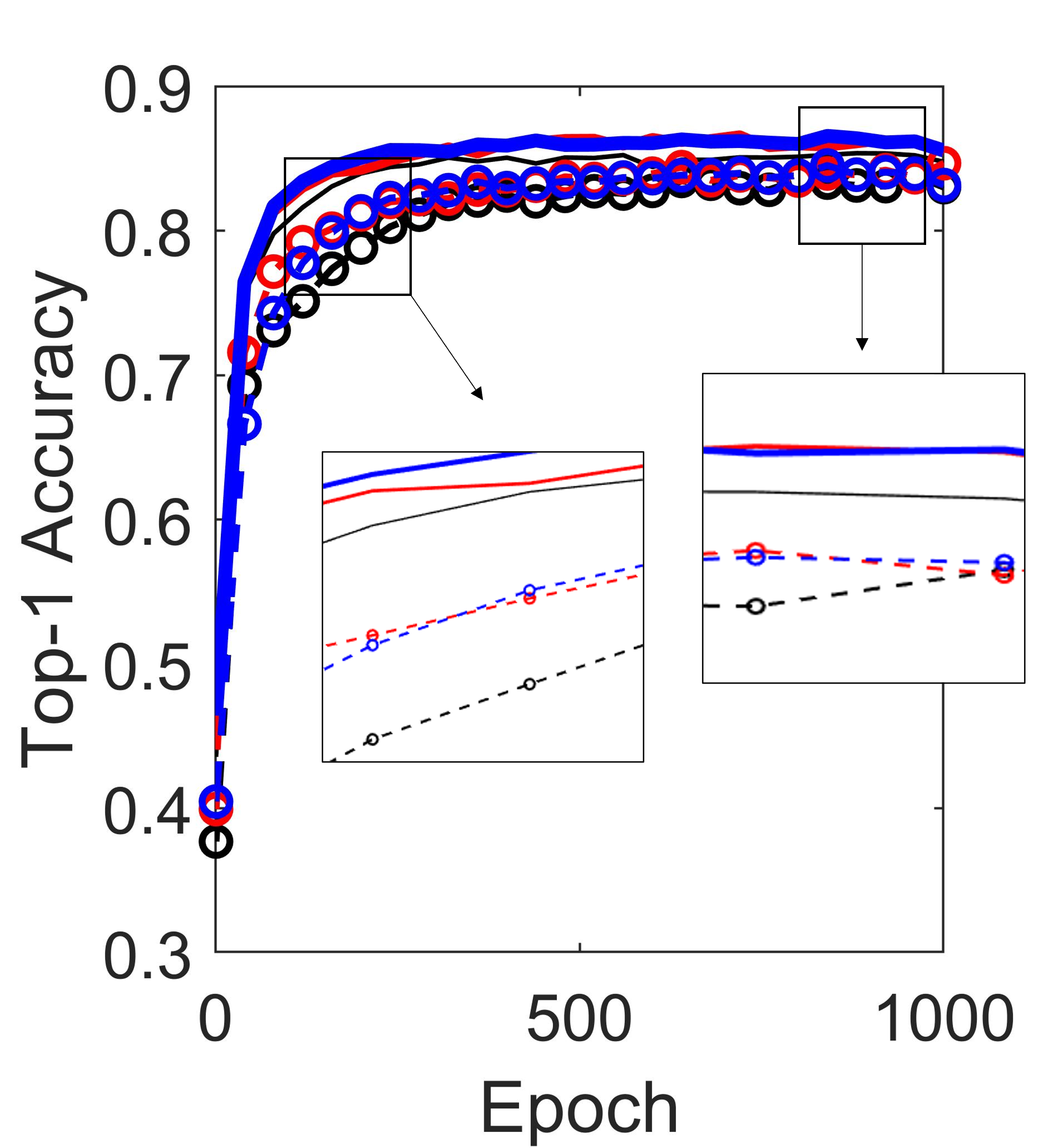}\\
     \small (a) Non-IID ($\alpha=0.1$).& \small (b) IID ($\alpha=1.0$).
\end{tabular}
\caption{Comparison of SNN training algorithms: SlimTrain \cite{yu2018slimmable}, USTrain \cite{ICCV2019_USlimmable}, and our proposed SUSTrain ($K=10$, $\sigma^2=-30\mathrm{dB}$).}
\label{fig:trainmethod}
\end{figure}

\begin{figure}[t!]
\scriptsize\centering
\begin{tabular}{cc}
    \includegraphics[width=0.48\columnwidth]{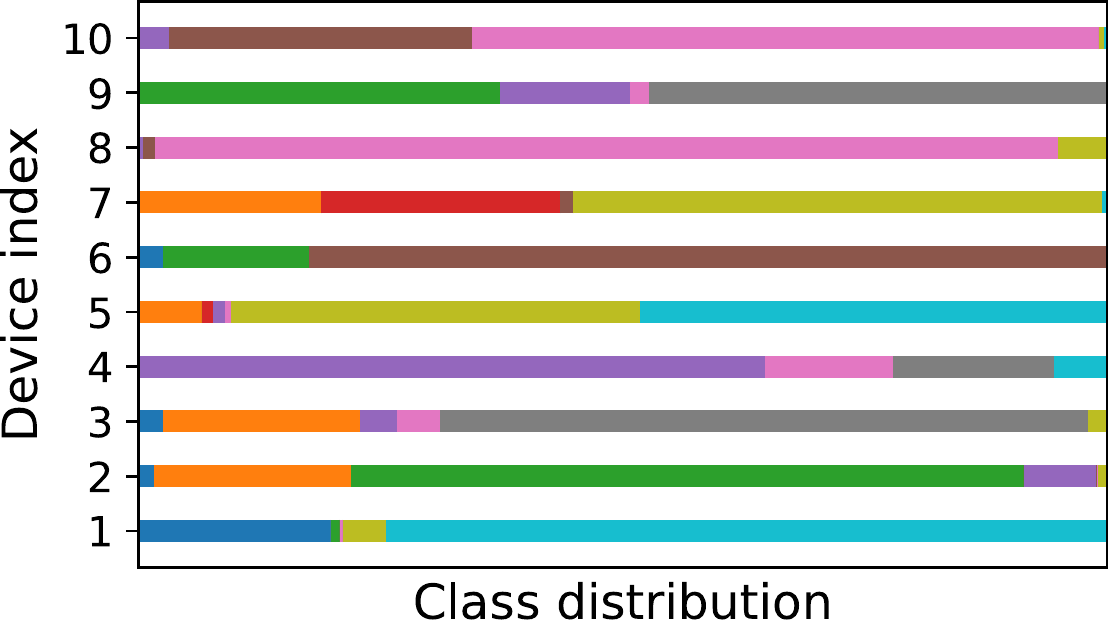} & \includegraphics[width=0.48\columnwidth]{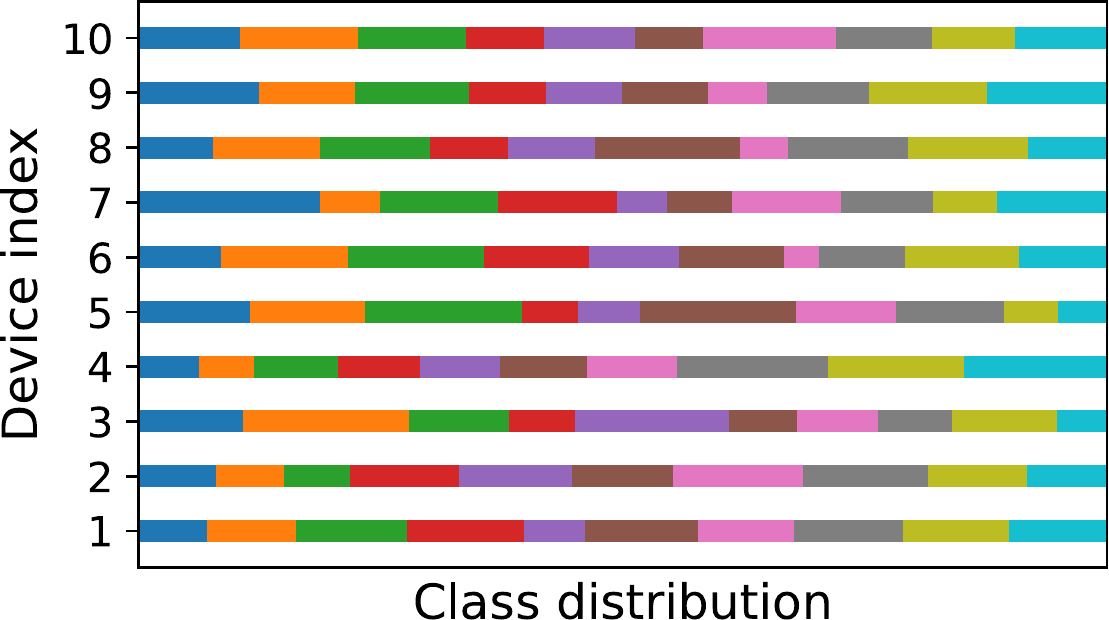}\\
    \small (a) Non-IID ($\alpha = 0.1$).&
    \small (b) IID ($\alpha = 10$). \\ 
\end{tabular}
    \caption{An illustration of the data distributions across $10$ devices for the different values of the Dirichlet concentration ratio $\alpha$.}
    \label{fig:Non-iid-distribution}
\end{figure}

To resolve this problem, we propose an ST algorithm, coined \emph{superpositioned USTrain (SUSTrain)}, which first holds all the forward propagation (FP) losses, and then concurrently updates all the width configurations with the superpositioned gradients. In doing so, a student is trained using IPKD without the logit mismatch with its full-width teacher's logit, while the teacher is simultaneously trained using the ground truth. For the device~$k$, the aforementioned local SNN update rule is,

{\small\begin{align}\label{eq:SUSTrain}
    \hspace{-5pt}\theta^k_{t+1} \!=\! \theta_t^k  \!-\! \eta_t \big[ w_1 \nabla \hat{F}^k (\theta_t^k\odot\Xi_1,\zeta^k_t) \!+\! w_2\nabla {F}^k (\theta^k_t \odot \Xi_2, \zeta^k_t)\big],
\end{align}}\normalsize
where $w_1 + w_2 = 1$ for constants $w_1, w_2>0$. The term $\eta_t>0$ is a learning rate, and $\zeta_t^k$ implies a stochastic input realization. The function $F^k(\theta_t^k\odot \Xi_i,\zeta_t^k)$ is the cross-entropy between the ground truth $y(\zeta_t^k)$ and the logit $M(\theta_t^k\odot \Xi_i,\zeta_t^k)$ of the $i$-th width configuration, whereas the IPKD function $\hat{F}^k(\theta_t^k\odot \Xi_i,\zeta_t^k)$ is the cross-entropy between the logit $M(\theta_t^k,\zeta_t^k)$ of the full-width configuration and the logit $M(\theta_t^k\odot \Xi_i,\zeta_t^k)$. Fig. \ref{fig:trainmethod} corroborates that regardless of the data distributions, SUSTrain achieves high accuracy with fast convergence, as opposed to USTrain that is effective only under IID data distributions (i.e., $\alpha=1.0$). The details of SUSTrain are in \textbf{Algorithm~\ref{alg:sustrain}}.

\begin{algorithm}[t!]\label{alg:sustrain}
\scriptsize
Initialize train parameter $\Theta =\{\theta^1,\cdots,\theta^k,\cdots,\theta^K,\theta^G\}$,\\
Initialize local dataset $\bm{Z}=\{Z_1,\cdots,Z_k,\cdots,Z_K\}$ with Dirichlet distribution\\
Initialize learning rate $\eta_t\leftarrow \eta_0$\\
Further Constraints: $\hat{F}^1(\cdot)=F^1(\cdot)$ \Comment{Discuss in Sec.~\ref{sec: convergence}}\\
\For{$t=1,\cdots,T$}
    {
    \For{$k=1,\cdots,K$}
        {
        Initialize gradients of model optimizer as $0$.\\
        Sample batch $\zeta^k_t$ from $Z_k$.\\
        Compute loss,  $loss \leftarrow F^k(\theta_t^k\odot \Xi_i,\zeta_t^k)$.\\
        Execute full-network  $M(\theta_t^k,\zeta_t^k)$.\\
        Accumulate gradients, $loss.backward()$.\\
        Execute full-network $M(\theta_t^k,\zeta_t^k)$.\\
        Stop gradients of  $M(\theta_t^k,\zeta_t^k)$ as label.\\
        \For{$i=1,\cdots,S-1$}
            {
            Execute and calculate loss $\hat{F}^k(\theta^k_t \odot \Xi_i, \zeta^k_t)$\\
            $loss \leftarrow loss  + w_i\hat{F}^k(\theta^k_t \odot \Xi_i, \zeta^k_t)$.
            }
            Calculate gradient of $loss$.\\ 
            Update model parameters.  \Comment{Eq.~(\ref{eq:SUSTrain})}
        }
    }
\caption{\small Superposition Training (SUSTrain)}
\end{algorithm}

\section{Global Model Aggregation with Superposition Coding \& Successive Decoding}\label{sec:4}

\subsection{Superposition Coding \& Successive Decoding}
At a receiver, the signal-to-interference-plus-noise ratio (SINR) is given as
$\gamma={g d^{-\beta} P}/{(\sigma^2 + P^I)}$,
where $P$, $P^I$, and $\sigma^2$ stand for the transmission, received interference, and noise powers. In addition, $d$ is a transmitter-receiver distance, $\beta\geq 2$ is a path loss exponent, and $g$ is random small-scale fading. Following the Shannon's capacity formula with a Gaussian codebook, the received throughput $R$ with the bandwidth $W$ is $R = W \log_2(1 + \gamma)$ (bits/sec).
When the transmitter encodes raw with a code rate~$u$, its receiver successfully decodes the encoded data if $R > u$. The decoding success probability is,
 
 \vspace{-3pt}{\small\begin{align}
    \Pr(R \geq u)&= \Pr(\frac{g d^{-\beta} P }{\sigma^2 + P^I} \geq u'). \label{eq:dsp1}
\end{align}}\normalsize
where $u'=2^{\frac{u}{W}}-1$. The decoding success probability with SC and SD is given by balancing $P$ and $P$ as elaborated next.

We consider simultaneously conveying $S$ messages from a transmitter to its receiver. These messages are SC-encoded before transmission \cite{Cover:TIT72}, while the total transmission power budget $P$ is allocated to the $i$-th message with the amount of $P = \sum\limits^{S}_{i=1}\nolimits P_{i}$ transmission power for $i\in[1,S]$.
When conveying only a single message, i.e., $S=1$, there exists no interference at reception, i.e., $P^I=0$. For $S>1$, SD determines the interference.

\label{sd} 
At its receiver, the SC-encoded message is supposed to be successively reconstructed by first decoding the stronger signal, followed by cancelling out the reconstructed (stronger) signal and then decoding the next stronger signal, i.e., SD, also known as successive interference cancellation~\cite{Jinho:TCOM17}. 
Under Rayleigh fading, the small-scale fading power gain $g$ follows an exponential distribution, i.e., $g\sim\textsf{Exp}(1)$.
Assuming $P_i > P_{i'}$ and $i'>i$, the receiver can successively decode the $i$-th message while experiencing the rest of the messages as its interference $P_i^I$, i.e., $P_i^I = g d^{-\beta} \hat{P}_i^I$,
where $\hat{P}_i^I\triangleq\sum^{I}_{i' = i+1}P_{i'}$ for $i\leq S-1$, and $\hat{P}^I_S=P^I_S=0$ as there is no interference for the last message. 
Let $R_i$ denote the throughput for the $i$-th message. By substituting $P^I_i$\typeout{(\ref{eq:interference})} into (\ref{eq:dsp1}), the distribution of $R_i$ is cast as, $\Pr(R_i \geq u ) = \Pr\left(g \geq \frac{c}{ P_i/u' - \hat{P}^I_i}\right)$,
where $c=\sigma^2 d^\beta$.
Applying this result, the decoding success probability $p_i$ of the $i$-th message is:

\vspace{-3pt}{\small\begin{align} 
    p_i &= \Pr(R_1 \geq u, R_{2} \geq u,\cdots, R_i \geq u  )\\
    &= \Pr \!\left( g\geq \frac{c}{ P_1/u' - \hat{P}^I_1}, \cdots , g \geq \frac{c}{ P_i/u' - \hat{P}^I_i} \right) \\
    &= \Pr\!\left( g \!\geq\!  \max \!\left\{\!
    \frac{c}{ P_1/u' - \hat{P}^I_1} , \cdots\!, \frac{c}{ P_i/u' - \hat{P}^I_i} 
    \!\right\}\!\right)\!. \label{eq:sicdsp1}
\end{align}}\normalsize

\begin{algorithm}[t!]
\scriptsize 
Initialize train parameters $\Theta = \{\theta^1,\cdots,\theta^k,\cdots,\theta^K,\theta^G\}$.\\
Split dataset $\bm{Z}$ into $K$ datasets $\bm{Z}=\{Z_1,\cdots, Z_k, \cdots, Z_K\}$.\\
\While{Training}{
    \texttt{//Local Model Training (Algorithm 1)}\\
    \For{$n=1,\cdots,K$}
        {
        \For{$\zeta_k$ in $Z_k$}
            {
            Update local model parameter $\theta^k$ \Comment{Eq.~(\ref{eq:SUSTrain})}
            }
        }

    \texttt{//SC\&SD-based Server Aggregation (Uplink)}\\
    \If{Aggregation Period}{
        $n_\mathsf{L}=|\mathsf{H\cup}\mathsf{F}| \leftarrow 0, n_\mathsf{R}=|\mathsf{F}| \leftarrow 0, \mathsf{H} \leftarrow \varnothing, \mathsf{F} \leftarrow  \varnothing$\\
        \For{$k=1,\cdots,K$}
            {$g \leftarrow rand(1)$\\
            \If{$p_2 \leq g < p_1$}
                {
                $ \mathsf{H} \leftarrow \mathsf{H} \cup k$,~$n_\mathsf{L} \leftarrow n_\mathsf{L} + 1$
                }            
            \If{$g\geq p_2$}
                {
                $ \mathsf{F} \leftarrow \mathsf{F} \cup k$,~$n_\mathsf{L} \leftarrow n_\mathsf{L} + 1$,~$n_\mathsf{R} \leftarrow n_\mathsf{R} + 1$
                }
            }
            $\triangleright$ \bf{Case1.} $n_\mathsf{L}>0$, $n_\mathsf{R}>0$\\
            \hspace{10pt}$\theta^G \leftarrow \frac{1}{|\mathsf{H}\cup\mathsf{F}| }{\sum_{k\in\mathsf{H}\cup\mathsf{F}}\theta^k\odot\Xi} + \frac{1}{|\mathsf{F}|}{\sum_{k\in\mathsf{F}}\theta^k\odot\Xi^{-1}} $ \\
            $\triangleright$ \bf{Case2.} $n_\mathsf{L}>0$, $n_\mathsf{R}=0$\\
            \hspace{10pt}$\theta^G \leftarrow \frac{1}{n_\mathsf{L}}\sum_{k\in\mathsf{H}}(\theta^k\odot\Xi)$\\
            $\triangleright$ \bf{Case3.} $n_\mathsf{L}=n_\mathsf{R}=0$\\
            \hspace{10pt} {Skip aggregation }
        }
    \texttt{//Local Update (Downlink)}\\
        \For{$n=1,\cdots,K$}
            {
                $\theta^k \leftarrow \theta^G$
            }
}
\caption{\small SlimFL with SC \& SD}
\label{alg:SlimFL}
\end{algorithm}

\subsection{SlimFL Operations}
We elaborate the SlimFL and global model aggregation. The notation for SlimFL is as summarized in Tab.~\ref{tab:notation-convergence}. 
The overall SlimFL operations are described in \textbf{Algorithm~\ref{alg:SlimFL}}.
The network consists of $K$ devices connected to a parameter server over wireless links. In the uplink from each device to the server, the device applies SC, and the server utilizes SD. To be precise, for every $k\in [1,K]$, the $k$-th local device has its local dataset $Z^k \in \bm{Z}$ and its SNN parameter $\theta^k$ with $2$ width configurations. The global data $\bm{Z}$ can be either IID or non-IID across devices. Each SNN $\theta^k$ is divided into the LH segment $\theta^k\odot\Xi$ and the RH segment $\theta^k\odot\Xi^{-1}$, where $\Xi=\Xi_1$ and $\Xi^{-1}=\Xi_2 - \Xi_1$.
The $k$-th local device is trained with superposition training (lines 4--9), which is written as (\ref{eq:SUSTrain}).
The local device uploads the SC-encoded local model $\theta^k$ to the server. All local devices transmit two messages (i.e., LH and RH segments) with different transmission power $P_1$ and $P_2$ where $P_1\gg P_2$. 
After reception, according to (\ref{eq:sicdsp1}), the server can successively decode using SD, and obtain: (i) $0.5$x model if {\small$g\geq  c/(P_1/u' - P_2)$\normalsize} (lines 15--17); (ii) $1.0$x model if the channel fading gain satisfies {\small$g\geq  \max\{ c/(P_1/u' - P_2), c/(P_2/u') \}$\normalsize} (lines 18--20); and (iii) otherwise it obtains no model. Accordingly, the server aggregates the RH segments from $\mathsf{F}$ of devices, and the LH segments from $\mathsf{H}\cup \mathsf{F}$ of devices.

Hereafter, for the convergence analysis in the next section, we assume that $K$ is sufficiently large such that $|\mathsf{H} \cup \mathsf{F}| \approx K p_1 $ and $|\mathsf{F}| \approx K p_2$, where $p_1$ and $p_2$ are the decoding success probabilities of the LH and RH segments, respectively, given in (\ref{eq:sicdsp1}). Consequently, at the $t$-th communication round, the server constructs a global model $\theta^{G}_{t}$ as follows:

\vspace{-3pt}{\small\begin{align}
\theta^{G}_{t} \leftarrow \frac{1}{K p_1}\sum_{k \in \mathsf{H} \cup \mathsf{F}}\nolimits \theta^{k}_{t} \odot \Xi + \frac{1}{K p_2}\sum_{k \in \mathsf{F}}\nolimits\theta^{k}_{t} \odot \Xi^{-1}.
\label{eq:global}
\end{align}}\normalsize

Although SlimFL is flexible enough to incorporate various training and communication techniques, henceforth we limit our scope by considering the following assumptions.
\begin{itemize}
    \item The downlink decoding is always successful (lines 29--32), ignoring SC and SD. This is partly advocated by the fact that the server (e.g., a base station) has much larger large transmit power than the uplink power.
    
    \item The number of local iterations per communication round is $1$, therefore omitting the superscript $G$, i.e., $\theta_t = \theta_t^G$.
    
\end{itemize}
These assumptions make the analysis of SlimFL mathematically amenable, as we shall elaborate in the next section.

\section{SlimFL Convergence Analysis}\label{sec: convergence}

To show the convergence of SlimFL, we follow the key derivation techniques utilized in \cite{li2019convergence,khaled2020tighter} for FedAvg. Nonetheless, SlimFL convergence analysis is non-trivial. One major reason is that the local model updates in (\ref{eq:SUSTrain}) and the global model aggregation in (\ref{eq:global}) include complicated binary masks due to the SNN architecture as well as SC and SD. Therefore, as opposed to FedAvg whose global objective function to be minimized is the weighted average of local loss functions $\{F^k(\theta_t^k)\}$, i.e., empirical risk, the objective function $F(\theta_t)$ of SlimFL is unclear. Alternatively, we define $F(\theta_t)$ based on its gradient $f_t = \nabla F(\theta_t)$ that can be derived through the local and global operations of SlimFL as detailed next.

After the downlink, the device $k$ replaces its local model with the downloaded global model, i.e., $\theta^{k}_{t} \leftarrow \theta_{t}$. Then, the device updates the local model, yielding:
     
     \vspace{-3pt}{\small\begin{align}
     \theta^{k}_{t+1} \leftarrow \theta_{t} -\eta_{t}g^{k}_{t},
     \label{eq:22}
     \end{align}}\normalsize
     where $g^{k}_{t} = \sum^{2}_{i = 1} \omega_{i}\nabla F^{k}(\theta_{t} \odot \Xi_{i}, \zeta^{k}_{t})$ follows from (\ref{eq:SUSTrain}). For mathematical tractability, here we assume that the soft target of the student can be approximated as the hard target, i.e., $ \hat{F}^k(\theta_{t} \odot \Xi_{i}, \zeta^{k}_{t}) \approx F^{k}(\theta_{t} \odot \Xi_{i}, \zeta^{k}_{t})$. 
     
     Next, after the uplink, the server  aggregates the updated local models, constructing the global model $\theta_{t+1}$. Applying (\ref{eq:22}) to (\ref{eq:global}), the constructed global model is cast as:
     
{\small\begin{align}
\theta_{t+1} 
&\!=\! \frac{1}{Kp_{1}}\!\!\sum_{k \in \mathsf{H} \cup \mathsf{F}}(\theta_{t} \!-\!\eta_{t}g^{k}_{t})\!\odot\!\Xi \!+\!\frac{1}{Kp_{2}}\!\!\sum_{k\in\mathsf{F}}(\theta_{t} \!-\!\eta_{t}g^{k}_{t})\!\odot\!\Xi^{-1}\hspace{-15pt}\\
&=\theta_{t}-\eta_{t}\Big(\underbrace{\frac{1}{Kp_{1}}\sum_{k \in \mathsf{H}\cup\mathsf{F}}g^{k}_{t}\odot\Xi + \frac{1}{Kp_{2}}\sum_{k \in \mathsf{F}}g^{k}_{t}\odot\Xi^{-1}}_{:=f_t}\Big),
\label{eqn:globalgrad}
\end{align}}\normalsize
resulting in $f_t$ in (\ref{eqn:globalgrad}), which characterizes $F(\theta_t)$. In (\ref{eqn:globalgrad}), the last step can be obtained from $|\mathsf{H}\cup\mathsf{F}|=Kp_1$, $|\mathsf{F}|=Kp_2$, and $\theta_t = \theta_t \odot (\Xi +  \Xi^{-1})$.

Hereafter we use the bar notation $\bar{\cdot}$ for the value averaged over $\{\zeta_t^k\}$, and $^*$ for indicating the optimum. For the functions $F$ and $\{F^k\}$, we consider the following assumptions that are widely used in the literature \cite{li2019convergence,stich2018local}.

\begin{assumption}\label{asm:L-smoothness}\textbf{(L-smoothness)}
	$F$ and $\{F^k\}$ are $L$-smooth, i.e., 
	$F^k(\theta_{v})  \leq F^k(\theta_{w}) + (\theta_{v} - \theta_{w})^T \nabla F^k(\theta_{w}) + \frac{L}{2} \| \theta_{v} - \theta_{w}\|^2$ for all $v,w>0$.
\end{assumption}
\begin{assumption}\label{asm:mu-strconvex}\textbf{($\bm{\mu}$-strong convexity)}
	$F$ and $\{F^k\}$ are $\mu$-strong convex:
	i.e., $F^k(\theta_{v})  \geq F^k(\theta_{w}) + (\theta_{v} - \theta_{w})^T \nabla F^k(\theta_{w}) + \frac{\mu}{2} \| \theta_{v} - \theta_{w}\|^2$ for all $v, w>0$. 
\end{assumption}
\begin{assumption}\label{asm:noniid-grad-bounded}\textbf{(Bounded local gradient variance)}
The variance of the local gradient $\nabla F^k(\theta^k,\zeta^k_t)$ is bounded within $Z_k$, which is given as 
	    $\mathbb{E}[\|\nabla F^k(\theta^k,\zeta^k_t)-\nabla \bar{F}^k(\theta)\|^2] \leq \sigma_k^2$.
\end{assumption}

Inspired by \cite{khaled2020tighter}, we define $\delta = \frac{1}{K}\sum_{k=1}^K \sigma_k^2$ as a factor that measures the non-IIDness of $\bm{Z}$. Indeed, $\frac{1}{K}\sum_{k=1}^K (\sigma_k - \frac{1}{K}\sum_{k=1}^K \sigma_k )^2$ is the variance (over $k$) of the local gradient variance (over $Z_k$). This characterizes the data distributions over devices, and so does $\delta$ without loss of generality.

To prove the convergence of SlimFL, we derive the following two lemmas.
\begin{lemma}\textbf{(Bounded global gradient variance)} \label{lem--1}
Under Assumption~\ref{asm:noniid-grad-bounded}, the variance of the global gradient $f_t$ is bounded within $\bm{Z}$, which is given as $\EB\|{f}_t-\bar{f}_t\|^2\leq B$ where $B = 4\delta (\frac{1}{p_1}+\frac{1}{p_2})\sum^{2}_{i=1}w^2_{i}$.
\end{lemma}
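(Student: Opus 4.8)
The plan is to bound $\EB\|f_t-\bar f_t\|^2$ directly from the closed form of $f_t$ in (\ref{eqn:globalgrad}), peeling off the stochastic fluctuation one layer at a time with two elementary tools: the inequality $\|a+b\|^2\le 2\|a\|^2+2\|b\|^2$ and the convexity bound $\|\frac{1}{n}\sum_{k}a_k\|^2\le\frac{1}{n}\sum_{k}\|a_k\|^2$. Here $\bar{\cdot}$ is the average over $\{\zeta_t^k\}$, so $\bar f_t$ has the same shape as (\ref{eqn:globalgrad}) with each $g^k_t$ replaced by $\bar g^k_t=\sum_i w_i\nabla\bar F^k(\theta_t\odot\Xi_i)$, and the decoded-set sizes are frozen at the deterministic values $|\mathsf{H}\cup\mathsf{F}|=Kp_1$ and $|\mathsf{F}|=Kp_2$ granted by the large-$K$ assumption.

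First I would split $f_t-\bar f_t$ into a left-half part $\Delta_1:=\frac{1}{Kp_1}\sum_{k\in\mathsf{H}\cup\mathsf{F}}(g^k_t-\bar g^k_t)\odot\Xi$ and a right-half part $\Delta_2:=\frac{1}{Kp_2}\sum_{k\in\mathsf{F}}(g^k_t-\bar g^k_t)\odot\Xi^{-1}$, and bound $\|f_t-\bar f_t\|^2\le 2\|\Delta_1\|^2+2\|\Delta_2\|^2$. For each $\Delta_j$ I would apply the convexity bound over the $Kp_1$ (resp.\ $Kp_2$) decoded devices and then drop the binary masks via $\|v\odot\Xi\|\le\|v\|$, reducing everything to the per-device quantity $\EB\|g^k_t-\bar g^k_t\|^2$.

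Next I would control that per-device term. Writing $g^k_t-\bar g^k_t=\sum_{i=1}^2 w_i\big(\nabla F^k(\theta_t\odot\Xi_i,\zeta_t^k)-\nabla\bar F^k(\theta_t\odot\Xi_i)\big)$, I would apply $\|a+b\|^2\le 2\|a\|^2+2\|b\|^2$ to the two width-configuration terms, take the $\{\zeta_t^k\}$-expectation, and invoke Assumption~\ref{asm:noniid-grad-bounded} at each point $\theta_t\odot\Xi_i$ to obtain $\EB\|g^k_t-\bar g^k_t\|^2\le 2\sigma_k^2\sum_{i=1}^2 w_i^2$; this step is the source of the $\sum_i w_i^2$ factor. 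Substituting back and summing, I would bound $\sum_{k\in\mathsf{H}\cup\mathsf{F}}\sigma_k^2$ and $\sum_{k\in\mathsf{F}}\sigma_k^2$ by the full sum $\sum_{k=1}^K\sigma_k^2=K\delta$. The normalizations then collapse as $\frac{1}{Kp_1}\cdot K\delta=\delta/p_1$ and $\frac{1}{Kp_2}\cdot K\delta=\delta/p_2$, producing the $\frac{1}{p_1}+\frac{1}{p_2}$ factor, and the two retained factors of $2$ combine to give exactly $B=4\delta(\frac{1}{p_1}+\frac{1}{p_2})\sum_{i=1}^2 w_i^2$.

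The main obstacle I anticipate is the careful bookkeeping of the two independent randomness sources --- the minibatch draw $\zeta_t^k$ and the random decoding membership in $\mathsf{H}$ and $\mathsf{F}$ --- and justifying that the set cardinalities can be treated as deterministic before the $\{\zeta_t^k\}$-expectation is taken. A related subtlety is that the claimed constant hinges on the coarse inequalities chosen: exploiting the disjoint supports of $\Xi$ and $\Xi^{-1}$ (which makes $\Delta_1,\Delta_2$ orthogonal) or an independence argument across devices would each give a tighter, differently-shaped bound, whereas the stated $B$ corresponds precisely to the repeated use of $\|a+b\|^2\le 2\|a\|^2+2\|b\|^2$ together with the loose replacement $\sum_{k\in\mathsf{H}\cup\mathsf{F}}\sigma_k^2\le K\delta$. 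Matching the factor $4$ and the $1/p_i$ scaling therefore rests on committing to exactly these bounds rather than the sharper alternatives.
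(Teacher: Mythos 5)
Your proposal follows essentially the same route as the paper's proof: the same split of $f_t-\bar f_t$ into LH/RH parts with the $\|a+b\|^2\le 2\|a\|^2+2\|b\|^2$ bound, the same Cauchy--Schwarz averaging over the decoded sets, the same mask-dropping step $\|v\odot\Xi\|\le\|v\|$, the same per-device bound $\EB\|g^k_t-\bar g^k_t\|^2\le 2\sigma_k^2\sum_i w_i^2$ from Assumption~\ref{asm:noniid-grad-bounded}, and the same final replacement of partial sums of $\sigma_k^2$ by $K\delta$. Your closing observation that the disjoint supports of $\Xi$ and $\Xi^{-1}$ would permit an exact orthogonal splitting (saving a factor of $2$) is correct but, as you note, the stated constant $B$ corresponds to the looser inequalities the paper actually uses.
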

\begin{proof}
According to $f_t$ in (\ref{eqn:globalgrad}) and Assumption~\ref{asm:noniid-grad-bounded}, $\|f_t-\bar{f}_t\|^2 = \|\frac{1}{Kp_1}\sum_{{k}\in\mathsf{H}\cup\mathsf{F}}(g^k_t-\bar{g}^k_t)\odot\Xi+\frac{1}{Kp_2}\sum_{{k}\in\mathsf{F}}(g^k_t - \bar{g}^k_t) \odot \Xi^{-1}\|^2
\leq \frac{2}{Kp_1}\sum_{k\in\mathsf{H}}\|(g^k_t-\bar{g}^k_t)\odot\Xi\|^2 + \frac{2}{Kp_2}\sum_{k\in\mathsf{F}}\|(g^k_t-\bar{g}^k_t)
\odot\Xi^{-1}\|^2\leq \frac{2}{Kp_1}\sum_{k \in \mathsf{H}}\|g^k_t-\bar{g}^k_t\|^2 + \frac{2}{Kp_2}\sum_{k\in\mathsf{F}}\|g^k_t-\bar{g}^k_t\|^2$, where the first inequality follows from the Cauchy–Schwarz (C-S) inequality, and the last step is because $\|X\odot\Xi\|^2 \leq \|X\|^2$. Similarly, $ \|g^k_t-\bar{g}^k_t\|^2 = \|\sum^2_{i=1}\nolimits w_i(\nabla F^k(\theta_t,\zeta^k_t)-\nabla F^k(\theta_t))\odot \Xi_i\|^2 
\leq 2 \sum^2_{i=1}\nolimits w^2_i\|\nabla F^k(\theta_t,\zeta^k_t) - \nabla F^k(\theta_t) \|^2$ and taking expectation of both sides gives 
$\EB\|g^k_t-\bar{g}^k_t\|^2\leq 2\sigma^2_k\sum^2_{i=1}\nolimits w^2_i.$ Combining these results finalizes the proof.
\end{proof}

\begin{lemma}\label{lem---2}\textbf{(Per-round global model progress)} Under  Assumptions~\ref{asm:L-smoothness} and \ref{asm:mu-strconvex} with a learning rate $\eta_t \leq \frac{1}{L}$ , the error between the updated global model and its optimum progresses as 
$\EB\|\theta_{t+1}-\theta^*\|^2\leq (1-\mu\eta_t)\EB\|\theta_t-\theta^*\|^2+ \eta^2_t{B}$.
\end{lemma}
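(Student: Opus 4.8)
The plan is to run the standard one-step analysis for strongly convex, smooth stochastic gradient descent, adapted to the mask-weighted global gradient $f_t$ defined in (\ref{eqn:globalgrad}). Since the effective update is $\theta_{t+1} = \theta_t - \eta_t f_t$, I would first expand the squared error as
\[
\|\theta_{t+1} - \theta^*\|^2 = \|\theta_t - \theta^*\|^2 - 2\eta_t \langle f_t,\, \theta_t - \theta^* \rangle + \eta_t^2 \|f_t\|^2,
\]
and then take the expectation over the stochastic realizations $\{\zeta_t^k\}$ conditioned on $\theta_t$. Using the construction $\bar f_t = \nabla F(\theta_t)$, the cross term becomes $\langle \nabla F(\theta_t),\, \theta_t - \theta^* \rangle$, while the squared-norm term splits via the bias--variance identity as $\EB\|f_t\|^2 = \|\bar f_t\|^2 + \EB\|f_t - \bar f_t\|^2 \le \|\nabla F(\theta_t)\|^2 + B$, where the last bound is exactly Lemma~\ref{lem--1}.

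Next I would apply the two regularity assumptions, which are posited directly on $F$. By $\mu$-strong convexity (Assumption~\ref{asm:mu-strconvex}) evaluated at $(\theta_t, \theta^*)$,
\[
\langle \nabla F(\theta_t),\, \theta_t - \theta^* \rangle \ge F(\theta_t) - F(\theta^*) + \tfrac{\mu}{2}\|\theta_t - \theta^*\|^2,
\]
which produces the contraction factor $(1 - \mu\eta_t)$ once it is multiplied by $-2\eta_t$. For the $\|\nabla F(\theta_t)\|^2$ term I would use the gradient bound that follows from $L$-smoothness (Assumption~\ref{asm:L-smoothness}) and optimality of $\theta^*$ (so $\nabla F(\theta^*)=0$), namely $\|\nabla F(\theta_t)\|^2 \le 2L\big(F(\theta_t) - F(\theta^*)\big)$, obtained by applying the descent lemma at $\theta_t$ and lower-bounding the resulting point value by $F(\theta^*)$.

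Substituting both bounds, the function-value gap $F(\theta_t)-F(\theta^*)$ collects with total coefficient $-2\eta_t(1 - L\eta_t)$, leaving $(1-\mu\eta_t)\|\theta_t-\theta^*\|^2 + \eta_t^2 B$ plus this gap term. The crux is the learning-rate condition: because $\eta_t \le \frac1L$ forces $1 - L\eta_t \ge 0$ and the gap is nonnegative by optimality of $\theta^*$, that entire term is nonpositive and can be discarded, after which taking the full expectation over the history yields the claim. I expect the only genuine delicacy to be justifying $\bar f_t = \nabla F(\theta_t)$ --- that averaging the mask-weighted aggregate over $\{\zeta_t^k\}$ reproduces the gradient of the implicitly defined objective $F$ --- which rests on the normalizations $|\mathsf{H}\cup\mathsf{F}| = Kp_1$, $|\mathsf{F}| = Kp_2$ and the identity $\theta_t = \theta_t \odot (\Xi + \Xi^{-1})$ recorded just before the lemma; the remaining manipulations are routine bookkeeping.
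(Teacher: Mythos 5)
Your proposal is correct and follows essentially the same route as the paper's proof: both reduce the update to $\theta_{t+1}=\theta_t-\eta_t f_t$, isolate the variance term $\EB\|f_t-\bar f_t\|^2\le B$ via Lemma~\ref{lem--1} (your bias--variance identity is exactly what the paper's add-and-subtract $\eta_t\bar f_t$ decomposition achieves once the cross term vanishes in expectation), and then invoke $\mu$-strong convexity for the $(1-\mu\eta_t)$ contraction and $L$-smoothness for $\|\nabla F(\theta_t)\|^2\le 2L(F(\theta_t)-F(\theta^*))$, discarding the gap term with coefficient $-2\eta_t(1-L\eta_t)\le 0$. Your remark that the only delicate point is justifying $\bar f_t=\nabla F(\theta_t)$ is apt, since the paper handles this by defining $F$ through $f_t$ in (\ref{eqn:globalgrad}) rather than proving it.
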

\begin{proof}
According to (\ref{eqn:globalgrad}), we have\\
$\|\theta_{t+1}-\theta^*\|^2=\|\theta_t-\eta_t f_t - \theta^* - \eta_t \bar{f}_t + \eta_t\bar{f}_t\|^2=$ \resizebox{1\columnwidth}{!}{$\underbrace{\|\theta_t-\theta^*-\eta_t\bar{f}_t\|^2}_{A_1}+\underbrace{2\eta_t\langle\theta_t-\theta^*-\eta_t f_t,\bar{f}_t-f_t\rangle}_{A_2} 
    + \underbrace{\eta^2_t\|f_t-\bar{f}_t\|^2}_{A_3}=$} $\|\theta_t-\theta^*_t\|^2-\underbrace{2\eta_t\langle\theta_t-\theta^*,\bar{f}_t\rangle}_{B_1} + \underbrace{\eta^2_t\|\bar{f}_t\|^2}_{B_2} + A_2 + A_3$.
Here, $\mathbb{E}[A_2] = 0$ due to $\mathbb{E}(f_t) = \bar{f}_t$, and $A_3$ is bounded according to Lemma~\ref{lem--1}. 
Note that $\bar{f}_t = \EB[f_t] = \EB[\nabla F(\theta_t)] = \nabla \EB[F(\theta_t)]$, and $\EB[F]$ inherits the $\mu$-strong convexity and L-smoothness from $F$.
By the L-smoothness of $\EB[F]$, we have
$\|\bar{f}_t\|^2 \leq 2L(\EB[F(\theta_t)-{F}(\theta^*)])$, showing the boundness of $B_2$.
Next, by the $\mu$-strong convexity of $\EB[F]$, we have
$\langle \theta^* - \theta_t, \bar{f}_t \rangle \leq \EB[F(\theta^*)-F(\theta_t)] - \frac{\mu}{2}\|\theta_t-\theta^*\|^2$, proving the boundness of $B_1$.
Applying the bounds of $B_1$ and $B_2$, we obtain $A_1 \leq (1- {\mu \eta_{t}})\|\theta_{t} -\theta^{*}\|^{2} -2\eta_{t}(1-L\eta_t)\EB[F(\theta_{t}) -F(\theta^{*})]$, where the last term on the RHS vanishes for $\eta_t < \frac{1}{L}$. Taking the expectation at both sides completes the proof.
\end{proof}

Now we are ready to prove our main theorem.
\begin{theorem} (\textbf{SlimFL Convergence})\label{convth2}
Under Assumptions \ref{asm:L-smoothness}--\ref{asm:noniid-grad-bounded} with the learning rate $\eta_t = \frac{2}{\mu{t}+2L-\mu}$, one has

\vspace{-3pt}{\small\begin{align}
\EB[F(\theta_{t})] - F^{*} 
\leq \frac{L}{\mu}\cdot\frac{\mu L \Delta_1 + 2B }{\mu t+ 2L-\mu},\label{eq:thm1}
\end{align}}\normalsize
where $B = 4\delta (\frac{1}{p_1}+\frac{1}{p_2})\sum^{2}_{i=1}w^2_{i}$ and $\Delta_t \triangleq \EB\|\theta_t-\theta^*\|^2$. Therefore, $\EB[F(\theta_{t})]$ converges to $F^{*}$ as $t\rightarrow\infty$.
\end{theorem}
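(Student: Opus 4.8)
The plan is to treat this as a standard diminishing-step-size stochastic-gradient argument, using the two lemmas as the per-iteration building blocks and then solving the resulting scalar recursion for the prescribed step size. First I would read off from Lemma~\ref{lem---2} the one-step recursion on $\Delta_t=\EB\|\theta_t-\theta^*\|^2$, namely $\Delta_{t+1}\le(1-\mu\eta_t)\Delta_t+\eta_t^2 B$, which already absorbs the bounded-variance estimate $B$ from Lemma~\ref{lem--1}. This recursion carries the entire dynamical content of the theorem; everything afterward is elementary once the step size is fixed.

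Next, substituting $\eta_t=\frac{2}{\mu t+2L-\mu}$ and abbreviating $a_t\triangleq\mu t+2L-\mu$ (so that $a_{t+1}=a_t+\mu$ and $\eta_t=2/a_t$), I would prove by induction the sharp iterate bound $\Delta_t\le v/a_t$ with $v\triangleq\max\{2L\Delta_1,\,4B/\mu\}$. The base case is immediate since $a_1=2L$ and $v\ge 2L\Delta_1$. For the inductive step, inserting $\Delta_t\le v/a_t$ into the recursion gives $\Delta_{t+1}\le\frac{(a_t-2\mu)v+4B}{a_t^2}$, which I would regroup as $\frac{(a_t-\mu)v}{a_t^2}+\frac{4B-\mu v}{a_t^2}$.

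The crux is the elementary identity $(a_t-\mu)(a_t+\mu)=a_t^2-\mu^2\le a_t^2$, which yields $\frac{(a_t-\mu)v}{a_t^2}\le\frac{v}{a_t+\mu}=\frac{v}{a_{t+1}}$, while the choice $v\ge 4B/\mu$ makes the residual term $4B-\mu v$ nonpositive; together these close the induction at denominator $a_{t+1}$. I expect this matching step to be the main obstacle, since the two lower bounds on $v$---one from the base case, one from the noise term---must hold simultaneously, and the telescoping must land on the exact denominator $a_{t+1}$ rather than a looser one.

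Finally, I would pass from the iterate bound to the claimed function-value bound. Using $\max\{a,b\}\le a+b$ gives $v\le 2L\Delta_1+4B/\mu$, hence $\Delta_t\le\frac{2L\Delta_1+4B/\mu}{\mu t+2L-\mu}$. Invoking L-smoothness (Assumption~\ref{asm:L-smoothness}) at the optimum, where $\nabla F(\theta^*)=0$, gives $F(\theta_t)-F^*\le\frac{L}{2}\|\theta_t-\theta^*\|^2$; taking expectations and combining yields $\EB[F(\theta_t)]-F^*\le\frac{L}{2}\cdot\frac{2L\Delta_1+4B/\mu}{\mu t+2L-\mu}=\frac{L}{\mu}\cdot\frac{\mu L\Delta_1+2B}{\mu t+2L-\mu}$, which is exactly \eqref{eq:thm1}, and the $1/t$ decay then gives $\EB[F(\theta_t)]\to F^*$.
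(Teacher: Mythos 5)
Your proposal is correct and follows essentially the same proof as the paper: the recursion from Lemma~\ref{lem---2}, an induction showing $\Delta_t \leq v/(\mu t + 2L - \mu)$ with $v$ the maximum of a base-case term and a noise term, and a final application of $L$-smoothness at $\theta^*$ to pass to function values. The only difference is cosmetic---you carry the denominator $a_t=\mu t+2L-\mu$ directly while the paper uses $t+2\kappa-1$ with $\kappa=L/\mu$, so your $v=\max\{2L\Delta_1,4B/\mu\}$ is just $\mu$ times the paper's $v=\max\{2\kappa\Delta_1,4B/\mu^2\}$, and your inductive regrouping $(a_t-\mu)v+(4B-\mu v)$ is the paper's step in rescaled form.
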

\begin{proof}
Since $\eta_t = \frac{2}{\mu{t}+2L-\mu} \leq \frac{1}{L}$, applying Lemma~\ref{lem---2}, we have
    $\Delta_{t+1} \leq \left(1 - \mu\eta_{t}\right)\Delta_{t} + \eta_{t}^{2}{B}$.
By induction, we aim to show that $\Delta_t \leq \frac{v}{t + 2\kappa-1}$ where $\kappa = \frac{L}{\mu}$ and $v = \max\{2\kappa\Delta_{1},{4B}/{\mu^2}\}$ as elaborated next.
By the definition of $v$, it is trivial that $\Delta_1\leq\frac{v}{2\kappa}$. Assuming that $\Delta_{t'} \leq \frac{v}{t' + 2\kappa-1}$ holds, we have
$\Delta_{t'+1} \leq (1-\mu\eta_{t'})\Delta_{t'} + \eta^{2}_{t'}B \leq \left(1- \frac{2}{t'+2\kappa -1}\right) \frac{v}{t'+2\kappa-1}+\frac{{4B}/{\mu^2}}{(t'+2\kappa-1)^2} = \frac{(t'+2\kappa-2)v-(v-{4B}/{\mu^{2}})}{(t'+2\kappa-1)^{2}} \leq \frac{t'+2\kappa-2}{(t'+2\kappa-1)^{2}}v \leq \frac{v}{t'+2\kappa}$,
which proves that $\Delta_{t} \leq \frac{v}{t + 2\kappa-1}$. For $t=1$, we obtain
    $v = \max\{2\kappa\Delta_{1},\frac{4B}{\mu^2}\} \leq 2\kappa\Delta_{1} + \frac{4B}{\mu^2}$. Finally, 
by the L-Smoothness of $F$, one has
    $\EB[F(\theta_{t})] - F^{*}  =\EB[F(\theta_{t}) -F(\theta^{*})] \leq \frac{L}{2}\EB\|\theta_{t} -\theta^{*}\|^{2}$. Applying Lemma~\ref{lem---2} with the aforementioned results, we have 
    $\EB\|\theta_{t} -\theta^{*}\|^{2} \leq \frac{v}{t +2\kappa -1} \leq \frac{2}{\mu}\cdot\frac{\mu L \Delta_1 + 2B}{\mu t+ 2L-\mu}$, which completes the proof of the theorem.
\end{proof}
The result of Theorem 1 exhibits several insightful characteristics of SlimFL as follows.

\BfPara{Robustness to poor channels}
In (\ref{eq:thm1}), we observe that aggregating more 0.5x and 1.0x models (i.e., increasing $p_1$ and $p_2$) equally contributes to reducing the global optimality gap. Therefore, aggregating 0.5x models can complement the frequent decoding failures of 1.0x models under poor channels.

 \BfPara{Failure under extremely poor channels} Consider an extremely poor channels where the server is unable to decode 1.0x models while aggregating only 0.5x models (i.e., $p_2\approx 0$ and $p_1>0$). In this case, the optimality gap diverges although it aggregates 0.5x models. Under such channel conditions, SC becomes useless, and vanilla FL with only 0.5x models is preferable to SlimFL.

 \BfPara{Robuestness to non-IID data} The optimality gap increases with $\delta$ (i.e., more non-IID). The increased gap can be counteracted by aggregating not only 1.0x models but also 0.5x models, as opposed to vanilla FL that benefits only from aggregating either 0.5x models or 1.0x models. 

Judging from the aforementioned observations, we conclude that SlimFL is preferable for non-IID data distributions and moderately poor channel conditions where $0 \ll p_1, p_2 < 1$. For an extremely good (i.e., $p_2\approx 1$) or an extremely poor (i.e., $p_1\approx 0$) channel conditions, vanilla FL with only 1.0x models or 0.5x models is preferable, respectively. These favorable conditions and effectiveness of SlimFL will be corroborated by simulation in Sec.~\ref{sec:5}.

Furthermore, Theorem 1 provides the design guidelines on SC and ST as elaborated in the following two propositions.
\begin{proposition}[\textbf{Optimal SC power allocations}]\label{remark:1}
Consider the SC power allocation ratio $\lambda \in (0.5,1]$ such that $P_1=\lambda P$ and $P_2 = (1-\lambda)P$. If $\lambda \gg \max\left\{ 0.5, {c u'(1+u')}/{P} \right\}$, 
the optimal SC power allocation ratio that minimizes the RHS of (\ref{eq:thm1}) is given as $\lambda^* = \frac{u'+\sqrt{1+u'} -1}{u'}$.
\begin{proof}
Define $D\triangleq \frac{1}{p_1}+\frac{1}{p_2}$. According to the RHS of (\ref{eq:thm1}), $\lambda^*$ minimize $D$. Since $P_1>P_2$, we have $D = \exp\left(\frac{c}{\lambda P/u'-(1-\lambda )P}\right)+\exp\left(\frac{c}{ (1-\lambda)P/u'}\right)$. If $\lambda \gg c u'(1+u')/{P}$, 
we can approximate the both terms in $D$ using the first-order Taylor expansion, yielding $D\approx 2+ \frac{c}{\lambda P/u'-(1-\lambda )P}+\frac{c}{ (1-\lambda)P/u'}$. The approximated $D$ is convex, and the optimum is given by the first order necessary condition.
\end{proof}

\end{proposition}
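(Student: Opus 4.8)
The plan is to collapse the power-allocation problem onto a single scalar calculus problem and then solve it by a first-order condition. First I would note that in the bound (\ref{eq:thm1}) every quantity except $B$ is independent of the split ratio $\lambda$: the constants $L,\mu,\Delta_1,t$ are fixed, and in $B=4\delta\big(\tfrac{1}{p_1}+\tfrac{1}{p_2}\big)\sum_{i=1}^{2}w_i^2$ both $\delta$ and $\sum_i w_i^2$ are also fixed. Since the right-hand side of (\ref{eq:thm1}) is strictly increasing in $B$ and $4\delta\sum_i w_i^2>0$, minimizing the optimality-gap bound over $\lambda$ is equivalent to minimizing $D\triangleq\tfrac{1}{p_1}+\tfrac{1}{p_2}$. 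This is the reduction stated at the outset of the proof.

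Next I would make $D$ explicit. Under Rayleigh fading $g\sim\textsf{Exp}(1)$, so $\Pr(g\ge x)=e^{-x}$, and (\ref{eq:sicdsp1}) yields $p_1=\exp\!\big({-}\tfrac{c}{P_1/u'-P_2}\big)$ together with $p_2=\exp\!\big({-}\max\{\tfrac{c}{P_1/u'-P_2},\tfrac{c}{P_2/u'}\}\big)$. A short computation shows that $P_1/u'-P_2>P_2/u'$ precisely when $\lambda>\tfrac{1+u'}{2+u'}$, which holds in the regime $\lambda\gg0.5$ (note $\tfrac{1+u'}{2+u'}>0.5$ for $u'>0$); hence the RH threshold is $\tfrac{c}{P_2/u'}$ and $\tfrac{1}{p_2}=\exp\!\big(\tfrac{c}{P_2/u'}\big)$. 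Substituting $P_1=\lambda P$ and $P_2=(1-\lambda)P$ gives the closed form $D(\lambda)=\exp\!\big(\tfrac{c}{\lambda P/u'-(1-\lambda)P}\big)+\exp\!\big(\tfrac{c}{(1-\lambda)P/u'}\big)$.

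With $D(\lambda)$ fixed, I would linearize and optimize. The hypothesis $\lambda\gg cu'(1+u')/P$ is what renders the exponents small, so the first-order expansion $e^{x}\approx 1+x$ produces the convex surrogate $D(\lambda)\approx 2+\tfrac{cu'}{P}\big(\tfrac{1}{\lambda(1+u')-u'}+\tfrac{1}{1-\lambda}\big)$. Setting the derivative to zero reduces to the quadratic condition $[\lambda(1+u')-u']^2=(1+u')(1-\lambda)^2$; expanding this into a quadratic in $\lambda$, its discriminant simplifies to $4(1+u')$, so that $\sqrt{1+u'}$ appears cleanly in the root. I would then select the unique root lying in $(0.5,1]$, which is the claimed $\lambda^*$, and a sign check that the surrogate's second derivative is positive on the feasible interval confirms the critical point is the global minimizer.

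The step I expect to be hardest is not the algebra but justifying the linearization itself. The second exponent $\tfrac{c}{(1-\lambda)P/u'}$ diverges as $\lambda\to1$, so $e^{x}\approx1+x$ is only legitimate on an interval bounded away from the upper endpoint; the condition $\lambda\gg cu'(1+u')/P$ directly tames the first exponent, but one must separately argue that the optimum does not sit in the regime where $1-\lambda$ is too small for the expansion to hold, so that the surrogate minimizer indeed approximates the true minimizer of $D$. Establishing this region-of-validity argument, and verifying that $\lambda^*$ falls inside it, is the delicate part; everything downstream is routine differentiation and root selection.
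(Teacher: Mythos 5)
Your overall route is the same as the paper's: you reduce minimization of the bound in (\ref{eq:thm1}) to minimizing $D=\frac{1}{p_1}+\frac{1}{p_2}$, write $D$ in closed form under Rayleigh fading, linearize both exponentials under the hypothesis $\lambda\gg cu'(1+u')/P$, and solve the first-order condition of the convex surrogate. Your intermediate work is in places more careful than the paper's: you justify the resolution of the $\max$ inside $p_2$ via the threshold $\lambda>\frac{1+u'}{2+u'}$ (the paper only invokes $P_1>P_2$), your surrogate $2+\frac{cu'}{P}\bigl(\frac{1}{\lambda(1+u')-u'}+\frac{1}{1-\lambda}\bigr)$ agrees with the paper's approximation, the discriminant of your quadratic is indeed $4(1+u')$, and your region-of-validity worry about the expansion as $\lambda\to 1$ is a legitimate point the paper silently ignores.

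However, your final step --- ``select the unique root lying in $(0.5,1]$, which is the claimed $\lambda^*$'' --- is asserted without verification, and it is exactly where the argument breaks. Solving your quadratic $[\lambda(1+u')-u']^2=(1+u')(1-\lambda)^2$ gives the roots $\lambda=\frac{(1+u')(u'-1)\pm\sqrt{1+u'}}{u'(1+u')}$, and the feasible one (the one keeping $\lambda(1+u')-u'>0$, i.e., a positive SINR denominator) is $\lambda^\star=\frac{u'-1+(1+u')^{-1/2}}{u'}=1-\frac{1}{(1+u')+\sqrt{1+u'}}$. This does \emph{not} equal the proposition's $\lambda^*=\frac{u'+\sqrt{1+u'}-1}{u'}$: writing $s=\sqrt{1+u'}$, the stated formula simplifies to $\frac{s+2}{s+1}>1$ for every $u'>0$, so it lies outside $(0.5,1]$ and cannot even satisfy the first-order condition. (The printed formula appears to result from cancelling $(1+u')$ against only the first term of the numerator $(1+u')(u'-1)+\sqrt{1+u'}$; note that the paper's own numerical optimum $\lambda^*\approx 0.662$ matches the corrected root, not the printed closed form.) So a faithful execution of your plan would not confirm the claimed $\lambda^*$ but rather expose this discrepancy; as written, the assertion that your selected root ``is the claimed $\lambda^*$'' is false, and the proof fails at that identification step.
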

 Note the condition $\lambda \gg \max\left\{ 0.5, {c u'(1+u')}/{P} \right\}$ above can be satisfied under small model sizes (e.g., $t'\rightarrow 0$), large bandwidth (e.g., $W\rightarrow \infty$), good channel conditions (e.g., $\sigma^2\rightarrow 0$), and/or large total transmit power budget (e.g., $P\rightarrow \infty$). For practical scenarios, by simulation we confirm that the analytic optimum is indistinguishable from the numerical optimum as shown in Fig~\ref{fig:allocation}.

\begin{proposition}[\textbf{Optimal ST weights}]\label{prop:superposition}
The optimal ST weights that minimize the RHS of (\ref{eq:thm1}) are given as $w_1^*=w_2^*=1/2$.
\begin{proof}
The RHS of (\ref{eq:thm1}) is minimized at the minimum of $\sum_{i=1}^2 w_i^2$. By the C-S inequality, we have $\sum^2_{i=1}w_i^2 \geq \frac{1}{2}(\sum^2_{i=1} w_i)^2$. By the condition $\sum^2_{i=1}w_i = 1$ in (\ref{eq:SUSTrain}) and the equality condition of the AM-GM inequality, one has the desired result.
\end{proof}
\end{proposition}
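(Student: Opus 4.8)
The plan is to recognize that Proposition~\ref{prop:superposition} is at heart a one-dimensional convex optimization, so almost all the work lies in the reduction rather than the optimization itself. First I would isolate the dependence of the bound on the weights. Inspecting the RHS of (\ref{eq:thm1}), namely $\frac{L}{\mu}\cdot\frac{\mu L \Delta_1 + 2B}{\mu t+ 2L-\mu}$, every quantity except $B$ is free of $(w_1,w_2)$; and within $B = 4\delta (\frac{1}{p_1}+\frac{1}{p_2})\sum_{i=1}^{2}w_i^2$ the weights enter only through the single factor $\sum_{i=1}^{2}w_i^2 = w_1^2 + w_2^2$. Since the coefficient $\frac{2L}{\mu(\mu t+2L-\mu)}$ multiplying $B$ is strictly positive for every $t\geq 1$ (using the standard fact $L\geq\mu$ for an $L$-smooth, $\mu$-strongly convex function, so that $\mu t+2L-\mu>0$), the RHS is strictly increasing in $B$. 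Hence minimizing it over the weights is equivalent to minimizing $w_1^2+w_2^2$.

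Next I would solve this reduced problem subject to the feasibility constraints inherited from (\ref{eq:SUSTrain}), i.e.\ $w_1+w_2=1$ with $w_1,w_2>0$. The cleanest route is the Cauchy--Schwarz (equivalently quadratic-mean/arithmetic-mean) inequality, $w_1^2+w_2^2 \geq \tfrac{1}{2}(w_1+w_2)^2 = \tfrac{1}{2}$, whose equality case is exactly $w_1=w_2$; combined with the affine constraint $w_1+w_2=1$ this pins down $w_1^*=w_2^*=\tfrac{1}{2}$. As a sanity check I would also verify via the substitution $w_2=1-w_1$, which turns the objective into the strictly convex one-variable quadratic $2w_1^2-2w_1+1$; its first-order condition gives $w_1=\tfrac{1}{2}$, and the positive second derivative confirms this is the unique minimizer.

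I do not expect a serious obstacle, since the optimization is elementary. The only point that deserves explicit care—and the one I would be sure to state—is the reduction step: one must confirm both that the weights influence the bound solely through $\sum_{i=1}^{2}w_i^2$ and that the bound is monotone in $B$, so that the constrained minimization of $B$ legitimately coincides with minimizing the full RHS of (\ref{eq:thm1}). Once this monotone reduction is in place, the equality condition of Cauchy--Schwarz delivers $w_1^*=w_2^*=\tfrac{1}{2}$ immediately.
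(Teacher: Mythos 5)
Your proposal is correct and follows essentially the same route as the paper's proof: reduce the problem to minimizing $\sum_{i=1}^{2} w_i^2$ (since the weights enter the RHS of (\ref{eq:thm1}) only through $B$), apply the Cauchy--Schwarz inequality $w_1^2+w_2^2 \geq \tfrac{1}{2}(w_1+w_2)^2$ under the constraint $w_1+w_2=1$, and invoke the equality condition to get $w_1^*=w_2^*=\tfrac{1}{2}$. Your version is in fact slightly more careful than the paper's, since you explicitly verify the monotonicity of the bound in $B$ (and hence the legitimacy of the reduction), a step the paper takes for granted.
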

 The impact of $\lambda^*$ and $w_i^*$ will be shown by simulation in Fig.~\ref{fig:opt} in the next section.

\section{Experiments}\label{sec:5}
 To show the effectiveness and feasibility of SlimFL, we present the performance of SlimFL exploiting SC and SD compared to its Vanilla FL counterpart without SC nor SD, in terms of accuracy, communication efficiency, and energy efficiency, as well as their robustness to various channel conditions and non-IID data distributions.
\subsection{Experimental Setup}

\BfPara{Baselines}
Our goal is enabling each device to obtain both large and small models so as to cope with its large and small energy levels in future. To this end, by leveraging SNNs with SC and SD, SlimFL simultaneously exchanges and trains $0.5$x and $1.0$x models by consuming the per-device bandwidth $W$, uplink transmission power $P$. This is compared with a Vanilla FL baseline, \emph{Vanilla FL-$1.5$x}. Due to the lack of width-adjustable SNNs, each device in Vanilla FL-$1.5$x separately runs fixed-width $0.5$x and $1.0$x models, referred to as \emph{Vanilla FL-$0.5$x} and \emph{Vanilla FL-$1.0$x}, respectively. Without SC nor SD, the device exchanges both $0.5$x and $1.0$x models separately. In brief, Vanilla FL-$1.5$x is tantamount to simultaneously running the two federated averaging operations separately for $0.5$x and $1.0$x models by doubling the bandwidth, transmission power, and computing resources. For clarity, we report the performance of Vanilla FL-$0.5$x and Vanilla FL-$1.0$x individually if available (i.e., accuracy, received bits), and otherwise we report only Vanilla FL-$1.5$x (i.e., energy cost).

\begin{table}[t!]
    \caption{Simulation Parameters.}
    \label{tab:tab_parameters}
    \scriptsize \centering
    \resizebox{0.6\columnwidth}{!}{\begin{tabular}{l|r}
    \toprule[1pt]
      \bf{Description}                & \bf{Value}  \\ \midrule
        Initial learning rate ($\eta_0$)        & $10^{-3}$ \\
        Optimizer                     & Adam\\ 
        Distance  ($d$)               &   100 [m]\\
        Path loss exponent ($\beta$)  & 2.5 \\
        Bandwidth per device ($W$)& $75\times10^6$ [Hz] \\
        Central frequency ($f_c$) &  $5.9$ [GHz] \\
        Uplink transmission power  ($P$) & $23$ [dBm] \\
        Noise power spectrum ($N_0$) &  $-169~\mathrm{[dB/Hz]}$\\
        \bottomrule[1pt] 
    \end{tabular}}
\end{table}
\begin{figure}[t!]\centering
    \begin{tabular}{c}
        \includegraphics[width=0.7\columnwidth]{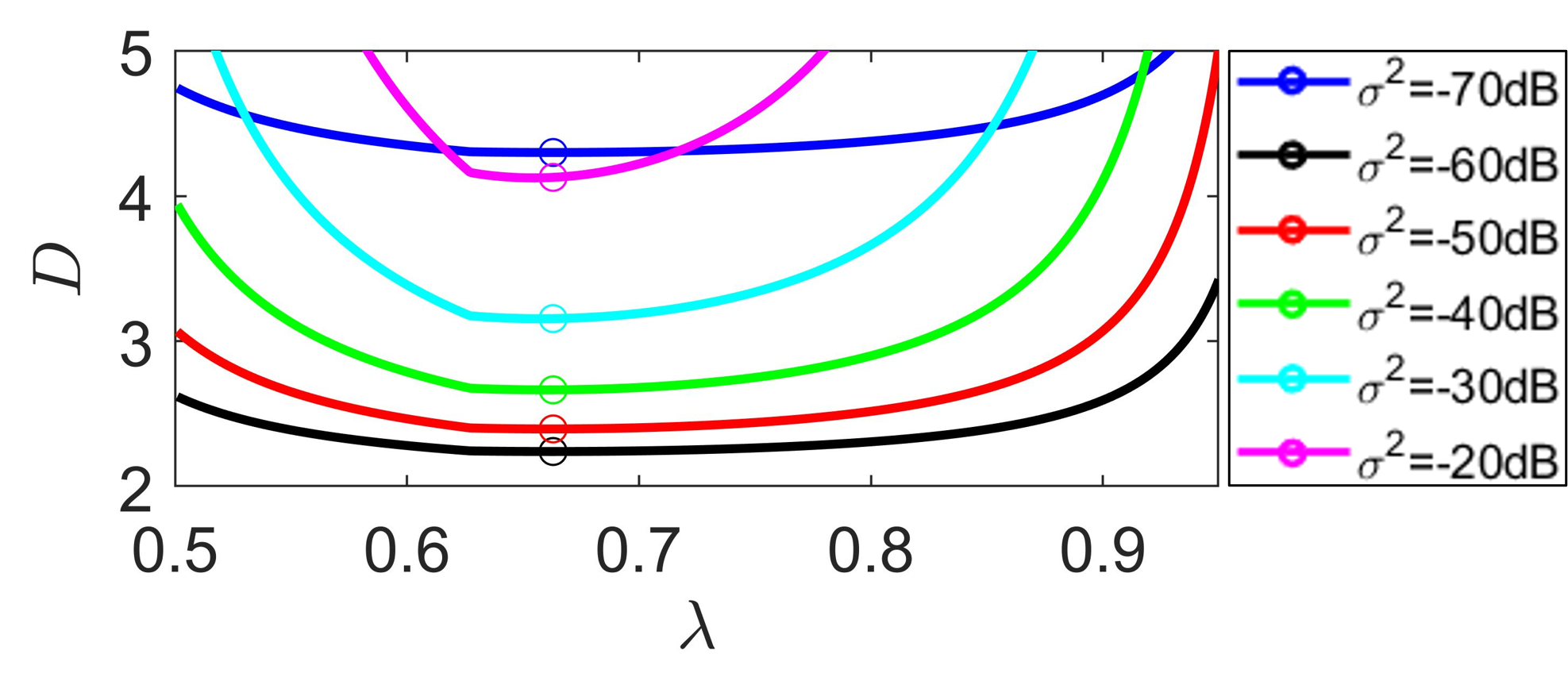}
    \end{tabular}
    \caption{SC power allocation ratio $\lambda$ versus $D$ ($:=1/p_1+1/p_2$).}
    \label{fig:allocation}
\end{figure}

\BfPara{Simulation Settings}
We consider a classification task by default with the Fashion MNIST dataset. Following the method proposed in~\cite{Arxiv_2019_NonIID}, the non-IIDness of the dataset distribution across devices is controlled by the Dirichlet distribution with its concentration parameter $\alpha\in\{0.1, 1.0, 10\}$, where a lower $\alpha$ is more non-IID distributed (i.e., more imbalanced numbers of samples over labels across devices), as visualized in Fig.~\ref{fig:Non-iid-distribution}. 
A single round of uplink and downlink communications is followed by every single local training epoch. {The communication channels over different devices are orthogonal in both uplink and downlink.} The small-scale fading gain $g$ for each channel realization follows an exponential distribution $g\sim\textsf{Exp}(1)$, i.e., Rayleigh fading \cite{TseBook:FundamaentalsWC:2005}. Communication hyperparameters are summarized in Tab.~\ref{tab:tab_parameters}. 

\subsection{Guidelines for SlimFL}
By convergence analysis, we have guidelines in respect to optimal power allocation (Proposition~\ref{remark:1}) and determination of weight parameters in superposition learning (Proposition~\ref{prop:superposition}).

\begin{figure}[t!]
\centering
\begin{tabular}{cc}
     \multicolumn{2}{c}{\includegraphics[width=0.96\columnwidth]{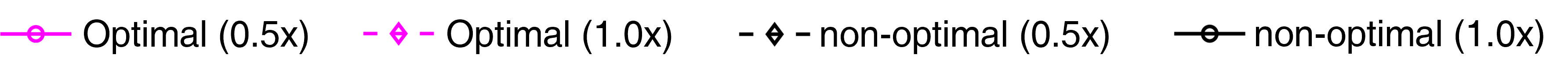}}\\
    \includegraphics[width=0.46\columnwidth]{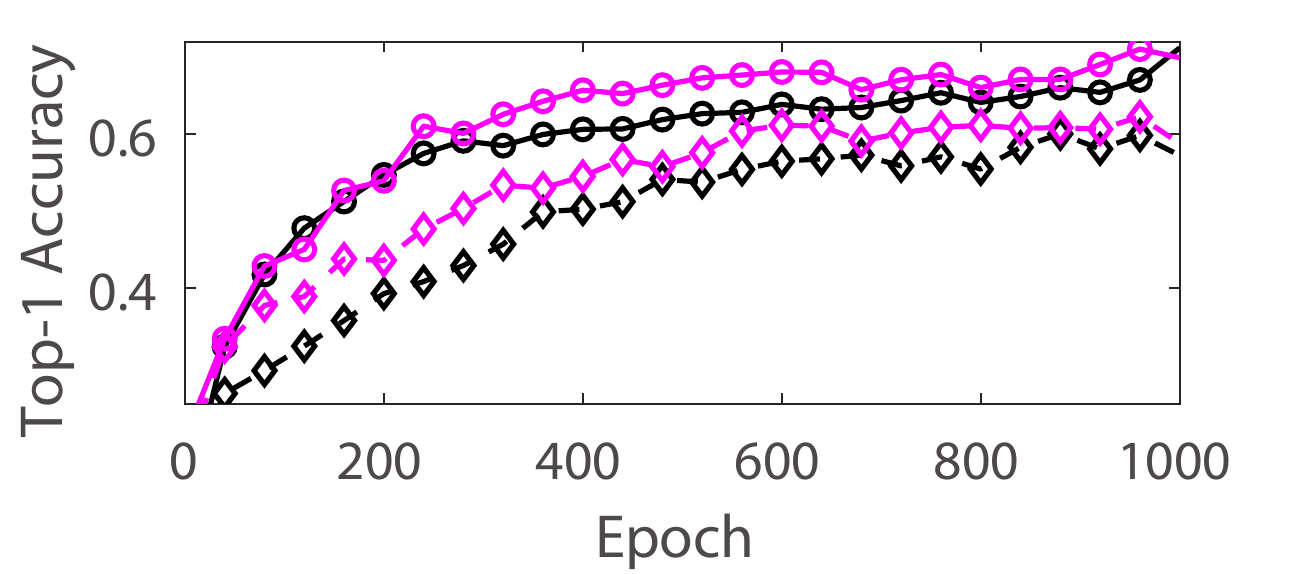}\label{lambda}& \includegraphics[width=0.46\columnwidth]{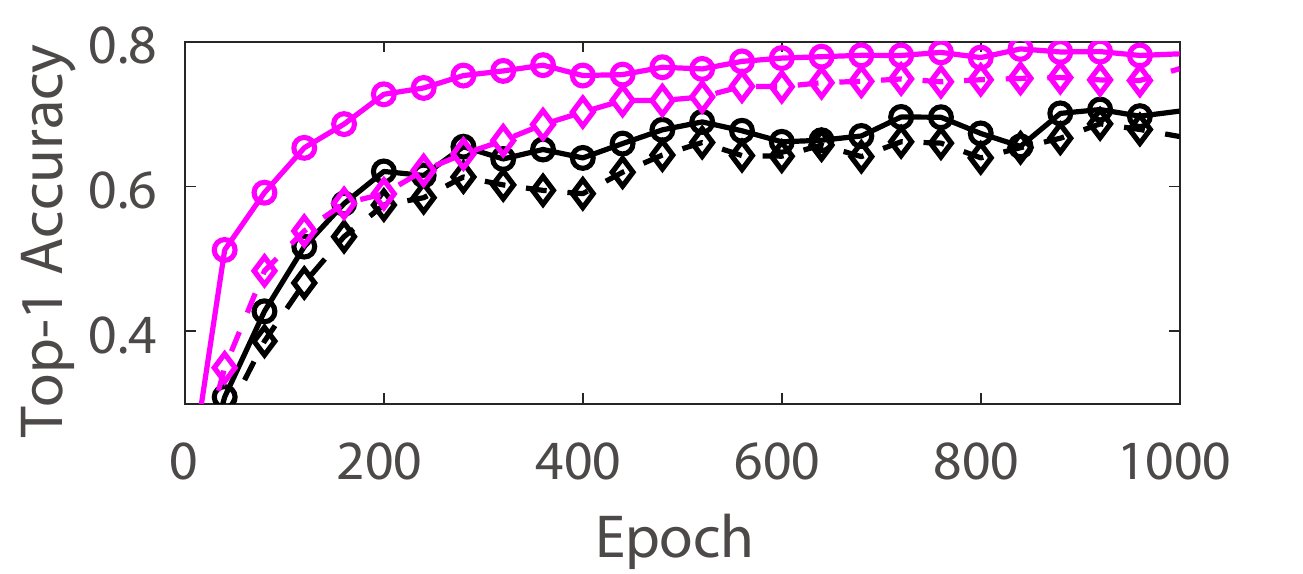}\label{wl} \\
    \small (a) SC power allocation ratio ($\lambda$). & \small (b) ST weight ($w_i$)\\
\end{tabular}
   
\caption{Top-1 accuracy under optimal and non-optimal design parameters: (a) $\lambda^*=0.663$ and $\lambda=0.8$, and (b) $w^*_1=w^*_2 = 0.5$ and $w_1 = 0.3$, and $w_2 = 0.7$) with $\alpha=0.1$.}
\label{fig:opt}
    \vspace{-1mm}
\end{figure}

\begin{figure}[t!]
\centering
\begin{tabular}{cc}
    \multicolumn{2}{c}{\includegraphics[width=0.46\columnwidth]{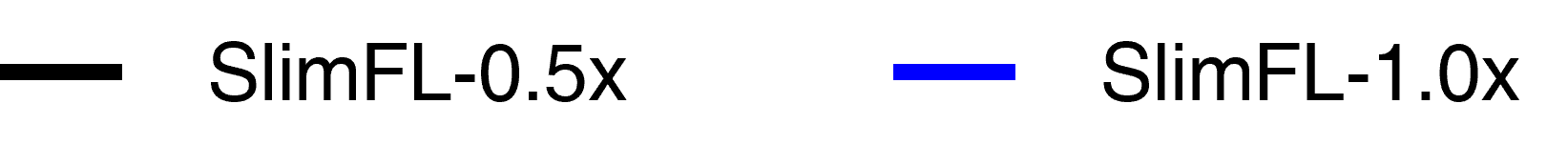}}\\
    \includegraphics[width=0.48\columnwidth]{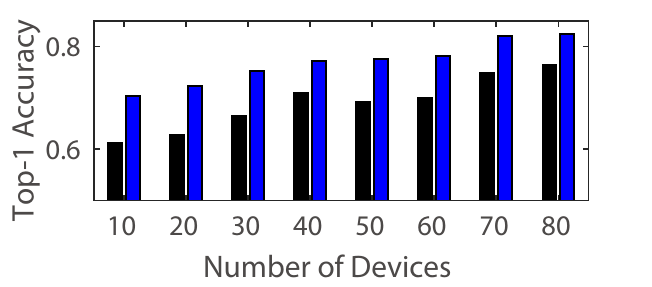}\label{fig:Number of worker}& \includegraphics[width=0.48\columnwidth]{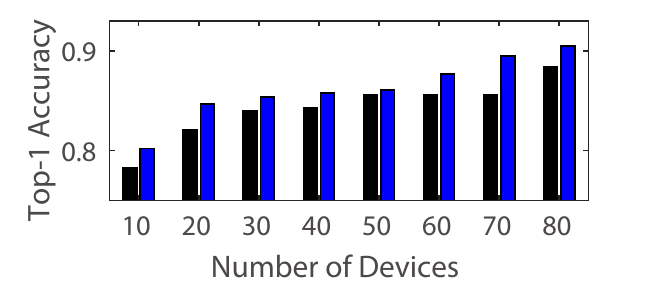}\label{fig:Number of worker2} \\
    \small (a) Top-1 accuracy ($\alpha=0.1$) & \small (b) Top-1 accuracy ($\alpha=1.0$)\\
\end{tabular}
\caption{Top-1 accuracy with different number of devices.}
\end{figure}

\BfPara{Optimal SC Power Allocations}\label{guide:powerallocation}
We propose a method of adjusting the parameter $\omega_l$ of ST to optimize our proposed model. Fig.~\ref{fig:allocation} represents the numerical results of optimal $\lambda^*$. In Proposition 1, $\lambda^*$ is calculated via the derivative of Tayler expansion. From analytical solution, we derive the optimal power allocation factor as $\lambda^*=0.662$, which is exactly same value of numerical optimum. In order to verify the guideline from Proposition~\ref{remark:1}, the simulation is conducted with the baseline ($\lambda=0.8$) in non-IID setting ($\alpha = 0.1$). Fig. ~\ref{fig:opt}(a) shows the result of experiment. The top-1 accuracy  with $\lambda^{*}$ shows $6.4\%$, $8.8\%$ higher accuracy than the top-1 accuracy in 0.5x and 1.0x, respectively. In other words, Proposition 1 provides influential guideline in SlimFL.

\BfPara{Optimal ST Weights}
In Proposition~\ref{prop:superposition}, SlimFL has tight bound when $w_1=\cdots=w_S=\frac{1}{S}$. Since we consider $S=2$ in our proposed scheme, all hyperparameters constituting ST should be $0.5$, i.e., $(w^*_1,w^*_2)=(0.5,0.5)$. To verify Proposition~\ref{prop:superposition}, we design baseline as $(w_1,w_2) = (0.3,0.7)$. Fig.~\ref{fig:opt}(b) shows performance difference according to different $\omega_i$. In optimal ST settings, top-1 accuracy achieves $78\%$ whereas, baseline achieves $69\%$. Thus, the guideline for ST positively effects the performance of SlimFL.

\BfPara{Scalability}
As Fig.~\ref{fig:Number of worker} represents, the accuracy of SLimFL improves as the number of federating devices increases. The SlimFL-0.5x accomplished the accuracy up to 79\%, and the SlimFL-1.0x accomplished the accuracy up to 85\%. In addition, with the non-iidness ($\alpha =0.1$), and with over the number of 70 federating local devices, SlimFL-0.5x shows higher accuracy than the SlimFL-1.0x with 20 federating local devices. Based on the experimental result, it is expected that optimality can be achieved by adjusting the number of local devices and the width through SlimFL adaptation when configuring an FL system based on non-IID datasets.

\begin{table}[t!]
    \caption{Accuracy under different channel conditions and~$\alpha$.}
    \label{tab:accuracy}
    \small\centering
    \resizebox{\linewidth}{!}{\begin{tabular}{c||ccc|ccc}
        \toprule[1pt]
        \multirow{3}{*}{\bf{Method}} &\multicolumn{6}{c}{\bf{Top-1 Accuracy (\%)}} \\ &\multicolumn{3}{c}{\bf{Good}}  & \multicolumn{3}{c}{\bf{Poor}}\\
         & $\alpha=0.1$ & $\alpha=1$ &  $\alpha=10$ & $\alpha=0.1$ & $\alpha=1$ &  $\alpha=10$ \\ \midrule
        SlimFL-0.5x & $54 \pm 2.2$ &  $ 83 \pm 1.0$ & $ 85 \pm 1.0$ & $56 \pm 2.4$ & $82 \pm 1.7$ & $ 85 \pm 1.1$ \\ 
        SlimFL-1.0x & $ 59  \pm  2.3$ &  $ 85 \pm 1.1$ & $ 87 \pm 1.1$ & $ 65 \pm 2.9$ & $ 84 \pm 1.4$ & $ 87 \pm 0.9$  \\
        Vanilla FL-0.5x & $ 45 \pm 5.9$ &  $ 84 \pm 1.1$ & $ 85 \pm 1.0$ & $ 39 \pm 8.3$ & $83 \pm 1.2$ & $ 85 \pm 0.9$\\
        Vanilla FL-1.0x & $69  \pm 5.8$ &  $ 85 \pm 4.0$ & $86  \pm  4.3$ & $ 55 \pm 9.2$ & $80 \pm  6.0$ & $ 82 \pm 4.7$ \\ \bottomrule[1pt]
    \end{tabular}}
\end{table}

\begin{figure}[t!]
\small\centering
\begin{tabular}{ccc}
    \multicolumn{3}{c}{\includegraphics[width=0.8\columnwidth]{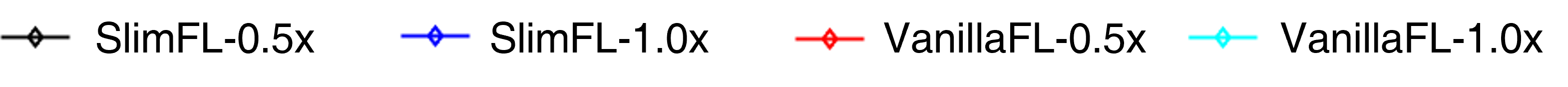}}\\
    \includegraphics[width=0.291\columnwidth]{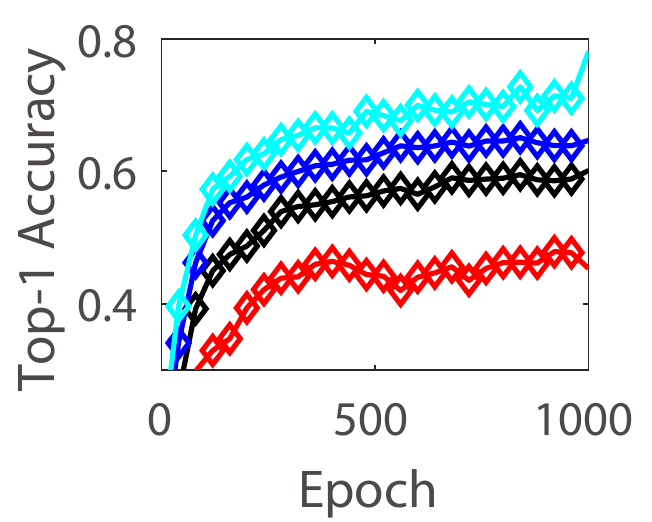}&
    \includegraphics[width=0.291\columnwidth]{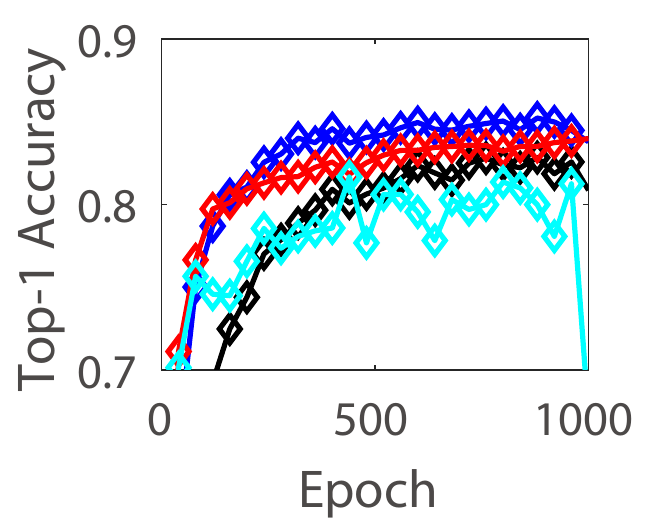}&
    \includegraphics[width=0.291\columnwidth]{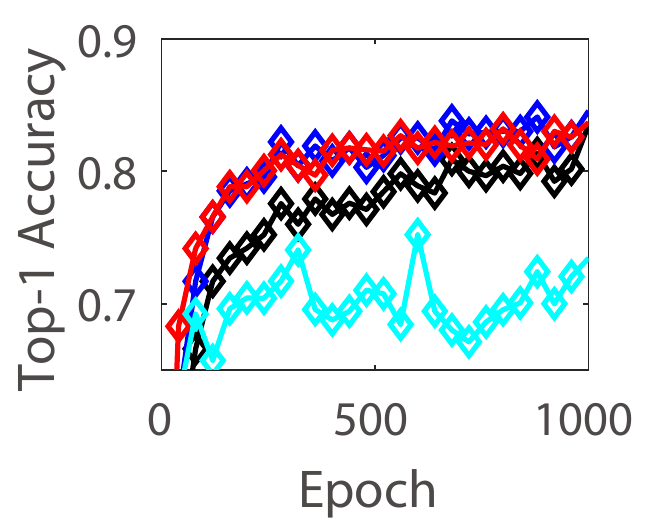}\\
    (a) $\sigma^2 = -40\mathrm{dB}$&
    (b) $\sigma^2 = -30\mathrm{dB}$&
    (c) $\sigma^2 = -20\mathrm{dB}$\\
    \includegraphics[width=0.291\columnwidth]{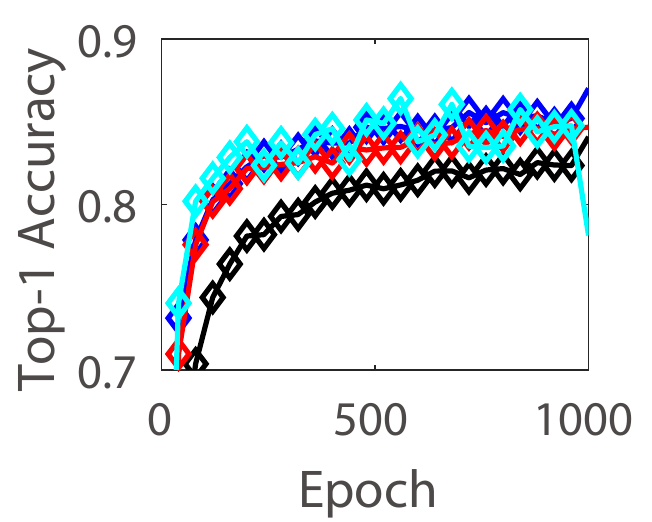}&
    \includegraphics[width=0.291\columnwidth]{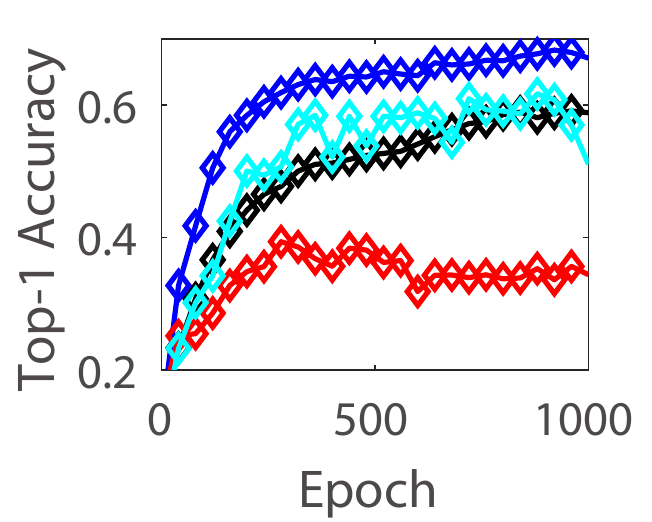}&
    \includegraphics[width=0.291\columnwidth]{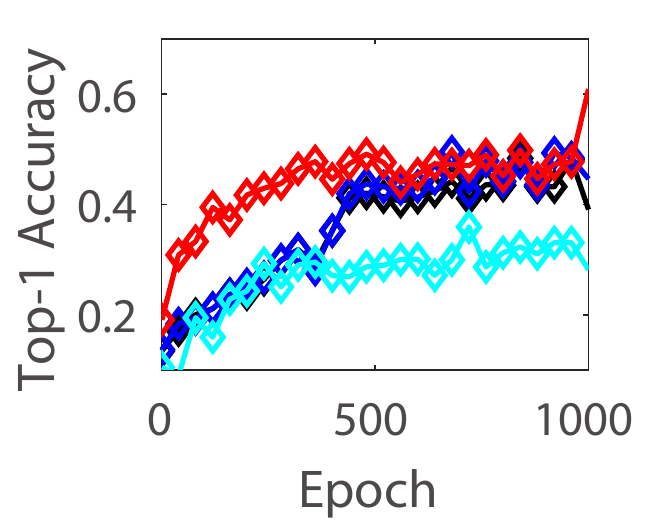}\\
    (d) $\sigma^2 = -40\mathrm{dB}$&
    (e) $\sigma^2 = -30\mathrm{dB}$&
    (f) $\sigma^2 = -20\mathrm{dB}$\\
\end{tabular}
\begin{tabular}{cc}
    \includegraphics[width=0.47\linewidth]{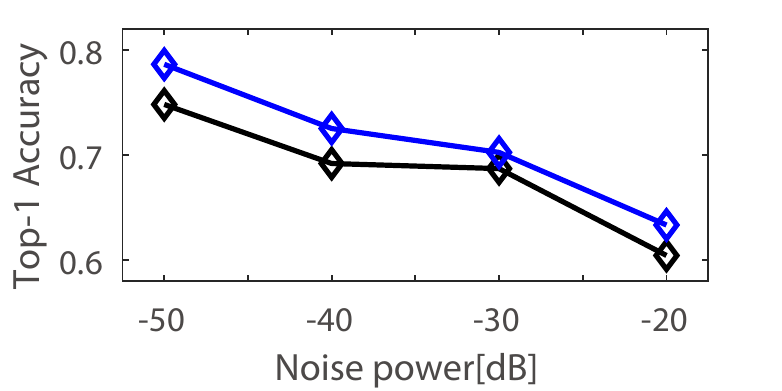}&
    \includegraphics[width=0.47\linewidth]{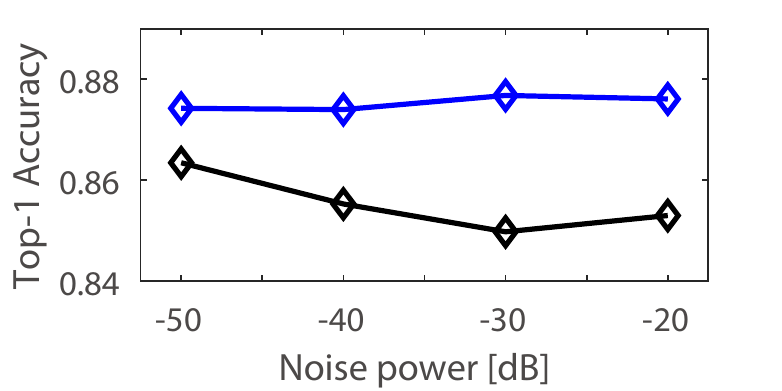}\\
    (g) $\alpha = 0.1$&
    (h) $\alpha = 10$ \\
\end{tabular}
    \caption{{Test accuracy} in various channel noise conditions (on average). (a--c) are with $\alpha=1$, (d--f) and (h) are with $\alpha=0.1$, (f) is with $\alpha=10$.}
    \label{fig:performance-good}
\end{figure}
\subsection{Performance of SlimFL}
We constructed an experiment to verify the performance of SlimfL compared to Vanilla FL in the environments with various communication conditions and non-iid settings.  

\BfPara{Robustness to Non-IID Data}
As illustrated in Fig.~\ref{fig:performance-good}(d--f) and Tab.~\ref{tab:accuracy}, SlimFL-0.5x shows a stable convergence under the conditions of $\alpha$. Vanilla FL-0.5x and Vanilla FL-1.0x exhibit the std of 8.3 and 9.2, in poor channel condition and with the non-IID dataset ($\alpha =0.1$). On the contrary, both SlimFL-0.5x and SlimFL-1.0x exhibit the std of 2.4 and 2.9 at top-1 accuracy. This tendency holds even when $\alpha = 1$, $\alpha = 10$. SlimFL-1.0x and SlimFL-0.5x exhibit lower variation than Vanilla FL-1.0x and Vanilla FL-0.5x. This underscores the robustness of SlimFL to non-IID data in poor channels.

\BfPara{Robustness to Poor Channels}
Fig.~\ref{fig:performance-good} and Tab.~\ref{tab:accuracy} show that both SlimFL and Vanilla FL achieve high accuracy in good channel conditions. However as the channel condition deteriorates from good to poor channels, Fig.~\ref{fig:performance-good}(c,f) and Tab.~\ref{tab:accuracy} illustrate that the maximum accuracy of Vanilla FL-1.0x at $\alpha=10$ drops from $86\%$ to $82\%$. Meanwhile, the accuracy of Slim-FL-1.0x keeps the same maximum accuracy $87$\% at $\alpha=10$ under both good and poor channels. What is more, at $\alpha=0.1$, SlimFL-1.0x even achieves $18$\% higher top-1 accuracy than Vanilla FL-1.0x that consumes more communication and computing costs. Furthermore, the std of Vanilla FL-1.0x's top-1 accuracy increases by up to $59$\% as channel condition deteriorates, whereas that of SlimFL increases by only up to $31$\%. These results advocate the robustness of SlimFL against poor channels, as well as its robustness to non-IID data distributions (low $\alpha$) and communication efficiency. 
\begin{table}[t!]
    \caption{Computing costs and transmission power of UL-MobileNet.}
    \label{tab:tab_cost}
    \small \centering
    \resizebox{0.75\columnwidth}{!}{\begin{tabular}{c|l|cc}
    \toprule[1pt]
      \multicolumn{2}{c|}{\bf{Description}}  & \bf{1.0x} & \bf{0.5x}  \\ \midrule
     \multirow{3}{*}{Computation}  &  MFLOPS / round & $2.76$ & $0.79$ \\ 
       &  \# of parameters & $4,586$ & $2,293$\\
       &  Bits / round & $172,688$ & $86,344$ \\\midrule
        \multicolumn{2}{c|}{Transmission Power ($P$) [mW]} & $132.1$  & $67.4$ \\ 
       \bottomrule[1pt]    
    \end{tabular}}
    \vspace{-3mm}
\end{table}
\begin{table}[t!]
    \caption{Transmission and Computing Costs per Communication Round.}
    \label{tab:energy}
    \scriptsize\centering
    \resizebox{\columnwidth}{!}{\begin{tabular}{c|c|c}
        \toprule[1pt]
        {\bf{Metric}} & \bf{SlimFL} & \bf{Vanilla FL-1.5x} \\\midrule
         Communication Cost [mW/Round]           & 199.5           & 399.1   \\
        {Computation Cost [MFLOPS/Epoch]} & {$3.56$} & $3.56$\\\bottomrule[1pt]
    \end{tabular}}
    \vspace{-3mm}
\end{table}
\begin{table}[t!]
    \caption{Successfully decoded bits of SlimFL, and Vanilla FL.}
    \label{tab:bits}
    \scriptsize\centering
    \resizebox{\columnwidth}{!}{\begin{tabular}{c|ccc|cc|cc}
        \toprule[1pt]
        \bf Decoding Success & \multicolumn{3}{c|}{\bf SlimFL} & \multicolumn{2}{c|}{\bf Vanilla FL-0.5x}  & \multicolumn{2}{c}{\bf Vanilla FL-1.0x}  \\
        \bf Bits [MBytes] &0.5x& 1.0x & drop & 0.5x & drop & 1.0x & drop \\ \midrule
        $\sigma^2=-30\mathrm{dB}$  & 1.96 & 198.45 & 5.46 & 102.21 & 0.72 & 200.30 & 5.56  \\
        $\sigma^2=-40\mathrm{dB}$   & 18.32 & 130.10 & 57.44 &93.87 & 9.06 & 144.93 & 60.96  \\ \bottomrule[1pt]
    \end{tabular}}
    \vspace{-3mm}
\end{table}

\subsection{Communication and Energy Efficiency}
In order to figure out the efficiency of communication and computation, we first calculate computation cost for feed-forwarding in UL-MobileNet~\cite{hernandez2020measuring} and communication per one communication round.

\BfPara{Communication Efficiency} 
The total amounts of transmitted bits between 10 devices and server in ideal channel conditions (i.e., always successful decoding) are 205.8MBytes for SlimFL and Vanilla FL-1.0x, and 102.9MBytes for Vanilla FL-0.5x.
Tab.~\ref{tab:bits} shows that SlimFL achieves up to $3.52$\% less dropped bits than Vanilla FL-1.0x, thanks to the use of SC and SD. The reduced dropped bits of SlimFL can be found by the successfully decoded bits of 0.5x models that cannot be simultaneously received under Vanilla FL-1.0x. Note that SlimFL decodes less 1.0x model bits than Vanilla FL-1.0x, as a part of transmission power of SlimFL is allocated to 0.5x models. In return, SlimFL not only receives 1.0x models but also 0.5x models simultaneously. The additionally received 0.5x models correspond to the LH parts of the 1.0x models, which therefore improve the accuracy and convergence speed of both 0.5x and 1.0x models. SlimFL enjoys the aforementioned benefits while consuming only the half of the transmission power and bandwidth compared to Vanilla FL-1.5x, as illustrated in Tab.~\ref{tab:bits}, corroborating its communication efficiency.

\BfPara{Energy Efficiency}
Thus far we have measured the performance of SlimFL after training with a fixed $1000$ epochs. Here, we measure the energy expenditure until convergence, where the training convergence is defined by the moment when the standard deviation (std) of test accuracy is below a target threshold and the minimum test accuracy becomes higher than the average test accuracy in $100$ consecutive rounds. To measure the convergence of models, we define the reference values of the mean $\mu_{\mathsf{Ref}}$ as 80\% and $\sigma_{\mathsf{Ref}}$ as 7.2\%, respectively. We define the convergence when average of Top-1 accuracy for 100 consecutive epochs is higher than $\mu_{\mathsf{Ref}}$, and the average std is lower than $\sigma_{\mathsf{Ref}}$.  
 Given the communication and computing energy costs per round in Tab.~\ref{tab:energy}, Tab.~\ref{tab:energy2} compares the total energy costs of SlimFL and Vanilla FL-1.5x until convergence. The results show that on average, SlimFL achieves $3.6$x less total computing cost with $2.9$x lower total communication cost until convergence. Such higher energy efficiency comes from the faster convergence of SlimFL even under non-IID and/or poor channel conditions due to SC and SD.

\begin{table}[t!]
    \caption{Total Computation cost and Transmission Power of SlimFL and Vanilla FL-1.5x in Various non-IIDness ($\alpha=0.1,1.0,10)$.}
    \label{tab:energy2}
    \scriptsize\centering
    \resizebox{\columnwidth}{!}{\begin{tabular}{c|c||cc|cc}\toprule[1pt]
        \multirow{2}{*}{\bf{Metric}} &  \multirow{2}{*}{\bf{non-IIDness}} & \multicolumn{2}{c|}{\bf{SlimFL}} & \multicolumn{2}{c}{{\bf Vanilla FL-1.5x}}\\
         &  &  \bf{Good} & \bf{Poor} &  \bf{Good} & \bf{Poor}\\\midrule
         
        \multirow{3}{*}{\shortstack{Communication \\ Cost [W]}} & \bf{$\alpha=0.1$} & 71.0 & 57.3 & 158.8 & 196.8 \\
         & \bf{$\alpha=1.0$} & 8.5 & 10.4 & 15.8 & 36.7 \\
         & \bf{$\alpha=10$}  & 3.03 & 3.51 & 10.2 & 25.4 \\\midrule
         
        \multirow{3}{*}{\shortstack{Computation \\ Cost [GFLOPS]}} & \bf{$\alpha=0.1$} & 1.27 & 1.02 & 1.88 & 2.41 \\
         & \bf{$\alpha=1.0$} & 0.15 & 0.18 & 0.22 & 0.51\\
         & \bf{$\alpha=10$} & 0.05 & 0.06 & 0.14 & 0.35\\ \bottomrule[1pt]
     \end{tabular}}
\end{table}

\section{Conclusion}\label{sec:6}
Existing FL solutions cannot cope flexibly with different devices having heterogeneous levels of available energy and channel throughput.
To tackle this problem, we propose a novel framework of FL over SNNs, SlimFL, by developing ST for local SNN training and exploiting SC for the trained model aggregation. Extensive experiments verify that SlimFL is a communication and energy efficient solution under various communication environments and data distributions. Particularly under poor channel conditions and non-IID data distributions, SlimFL even achieves higher accuracy and faster convergence with lower energy expenditure than its vanilla FL counterpart consuming $2$x more communication resources. Additionally incorporating more width configurations and local iterations could be interesting topics for future research.

\section*{Acknowledgment}
This research was funded by IITP 2021-0-00467. The first three authors are equally contributed (first authors).  
S. Jung, J. Park, and J. Kim are corresponding authors.

\appendices
\section{Local SNN Training Algorithm}\label{sec:Appendix-1}

\begin{algorithm}[ht]\label{alg:slim}
\caption{\small SlimTrain~\cite{yu2018slimmable}}
\scriptsize 
    {Define \textit{switchable width list} for slimmable network \(M\), for example, {\([0.25, 0.5, 0.75, 1.0]\times\)}.}\\
    {Initialize shared convolutions and fully-connected layers for slimmable network \(M\).}\\
    {Initialize independent batch normalization parameters for each \textit{width} in \textit{switchable width list}.}\\
    \For {$i = 1, ..., n_{iters}$}
        {Get next mini-batch of data \(x\) and label \(y\).\\
        Clear gradients of weights, \(optimizer.zero\_grad()\).\\
        \For {\textit{width} in \textit{switchable width list}}
            {
            Switch the batch normalization parameters of current width on network \(M\).\\
            Execute sub-network at current width, \(\hat{y} = M'(x)\).\\
            Compute loss, \(loss = criterion(\hat{y}, y)\).\\
            Compute gradients, \(loss.backward()\).
            }
        Update weights, \(optimizer.step()\).
        }
\end{algorithm}

\begin{algorithm}[ht]\label{alg:usslim}
\scriptsize 
    Define \textit{width range}, for example, { $[0.25,0.5,0.75,1.0]$x}.\\
    Define \textit{n} as number of sampled widths per training iteration, for example, $n=4$.\\
    Initialize training settings of shared network \(M\).\\
    \For {$(t = 1, ..., T_{iters})$}
    {
        Get next mini-batch of data $x$ and label $y$).\\
        Clear gradients, $optimizer.zero\_grad()$.\\
        Execute full-network, $y' = M(x)$.\\
        Compute loss, $loss = criterion(y', y)$.\\
        Accumulate gradients, $loss.backward()$.\\
        Stop gradients of $y'$ as label, $y' = y'.detach()$.\\ 
        Add smallest width to \textit{width samples}.\\
        \For {\textit{width} in \textit{width samples}}{
            Execute sub-network at \textit{width}, $\hat{y} = M'(x)$.\\
            Compute loss, $loss = criterion(\hat{y}, y')$.\\
            Accumulate gradients, $loss.backward()$.
            }
        Update weights, \(optimizer.step()\).
    }
\caption{\small USTrain~\cite{ICCV2019_USlimmable}}
\end{algorithm}

\begin{table}[t!]
    \caption{List of Notations}
    \label{tab:notation-convergence}
    \footnotesize\centering
    \resizebox{\columnwidth}{!}{\begin{tabular}{c|l}
        \toprule[1pt]
        \textbf{Notation} & \textbf{Description}\\\midrule[1pt]
        ${T}$ & Total iteration steps.\\
        ${S}$ & The number of SNN width configurations.\\
        $\theta^{G}$ & Global model parameter vector. \\
        $\theta^{k}$ & Local model parameter vector of the $k$-th device. \\
        $\mathcal{K}$ & A set of devices $(\mathcal{K}=\{1,\cdots, k, \cdots, K\})$.\\\midrule
        $\Xi$ & A binary mask to extract weight parameters of an LH segment. \\
        $~~\Xi^{-1}$ & A binary mask to extract weight parameters of an RH segment. \\
        $\mathsf{H}$ & A set of successfully decoded LH segments.\\
        $\mathsf{F}$ & A set of successfully decoded RH segments.\\
        $n_\mathsf{L}$ & The number of successfully decoded LH segments.\\
        $n_\mathsf{R}$ & The number of successfully decoded RH segments.\\
        $p_{1}$ & The decoding success probability of an LH segment.\\
        $p_{2}$ & The decoding success probability of an RH segment.\\\midrule     
        $\bm{Z}$ & Entire dataset.\\
        $\Xi_{{i}}$ & A binary mask to extract weight parameters of the $i$-th smallest model. \\
        $\omega_{i}$ & Positive constant for updating full model via the $i$-th smallest model. \\ 
        $\eta_t$ & Learning rate at the iteration $t$.\\
        $\zeta^k_t$ & The local data sampled from $k$-th user at the iteration $t$.\\  
        \bottomrule[1pt]
     \end{tabular}}
\end{table}

SlimTrain in~\cite{yu2018slimmable} and USTrain in~\cite{ICCV2019_USlimmable} are described by \textbf{Algorithm~\ref{alg:slim}} and \textbf{Algorithm~\ref{alg:usslim}}, respectively.
In essence, SlimTrain and USTrain both utilize alternating methods, and USTrain utilizes IPKD and the sandwich rule that are effective when each SNN has more than two width configurations. We only consider $2$ width configurations, making the sandwich rule unfit for our case. Therefore, ignoring the sandwich rule, our proposed SUSTrain only utilizes IPKD while additionally exploiting ST. Fig.~\ref{fig:trainmethod} shows that SUSTrain outperforms SlimTrain and USTrain under both IID and non-IID data.

\bibliographystyle{IEEEtran}

\end{document}